\newcommand{\xmark}{\ding{55}}%
\newcommand{\thickhline}{%
    \noalign {\ifnum 0=`}\fi \hrule height 2pt
    \futurelet \reserved@a \@xhline
}
\newtheorem{myDef}{Definition}
\newtheorem{myproposition}{Proposition}
\newcommand{\tabincell}[2]{\begin{tabular}{@{}#1@{}}#2\end{tabular}}
\newcommand{\cmark}{\ding{51}}%
\title{Gated Attention Coding for Training High-performance and Efficient Spiking Neural Networks}
\author{
    Xuerui Qiu\textsuperscript{\rm 1}, Rui-Jie Zhu\textsuperscript{\rm 2}, Yuhong Chou\textsuperscript{\rm 4}, Zhaorui Wang\textsuperscript{\rm 1}, Liang-jian Deng \textsuperscript{\rm 1}\thanks{Corresponding author.}, Guoqi Li \textsuperscript{\rm 3}\thanks{Corresponding author.}
}
\begin{document}

\maketitle

\begin{abstract}
 Spiking neural networks (SNNs) are emerging as an energy-efficient alternative to traditional artificial neural networks (ANNs) due to their unique spike-based event-driven nature. Coding is crucial in SNNs as it converts external input stimuli into spatio-temporal feature sequences.  However, most existing deep SNNs rely on direct coding that generates powerless spike representation and lacks the temporal dynamics inherent in human vision. Hence, we introduce Gated Attention Coding (GAC), a plug-and-play module that leverages the multi-dimensional gated attention unit to efficiently encode inputs into powerful representations before feeding them into the SNN architecture. GAC functions as a preprocessing layer that does not disrupt the spike-driven nature of the SNN, making it amenable to efficient neuromorphic hardware implementation with minimal modifications. Through an observer model theoretical analysis, we demonstrate GAC's attention mechanism improves temporal dynamics and coding efficiency. Experiments on CIFAR10/100 and ImageNet datasets demonstrate that GAC achieves state-of-the-art accuracy with remarkable efficiency. Notably, we improve top-1 accuracy by 3.10\% on CIFAR100 with only 6-time steps and 1.07\% on ImageNet while reducing energy usage to 66.9\% of the previous works. To our best knowledge, it is the first time to explore the attention-based dynamic coding scheme in deep SNNs, with exceptional effectiveness and efficiency on large-scale datasets. The Code is available at https://github.com/bollossom/GAC.

\end{abstract}
\vspace{-3mm}
\section{Introduction}
Artificial neural networks (ANNs) have garnered remarkable acclaim for their potent representation and astounding triumphs in a plethora of artificial intelligence domains such as computer vision \cite{krizhevsky2017imagenet}, natural language processing \cite{hirschberg2015advances} and big data applications \cite{niu2020dual}. Nonetheless, this comes at a significant cost in terms of energy consumption. In contrast, spiking neural networks (SNNs) exhibits heightened biological plausibility \cite{maass1997networks}, spike-driven nature, and low power consumption on neuromorphic hardware, e.g., TrueNorth \cite{merolla2014million}, Loihi \cite{davies2018loihi}, Tianjic \cite{pei2019towards}. 
Moreover, the versatility of SNNs extends to various tasks, including image classification \cite{hu2021advancing, xu2023constructing}, image reconstruction \cite{qiu2023vtsnn}, and language generation \cite{zhu2023spikegpt}, although the majority of their applications currently lie within the field of computer vision.

%While recent SNN research advances have led to improved performance, e.g., normalization \cite{zheng2021going}, network architecture \cite{hu2021advancing}, loss function \cite{guo2022recdis}, it remains challenging to directly train effective and efficient SNNs from scratch.
\par
\begin{figure}[!t]
\centering
\includegraphics[scale=0.32]{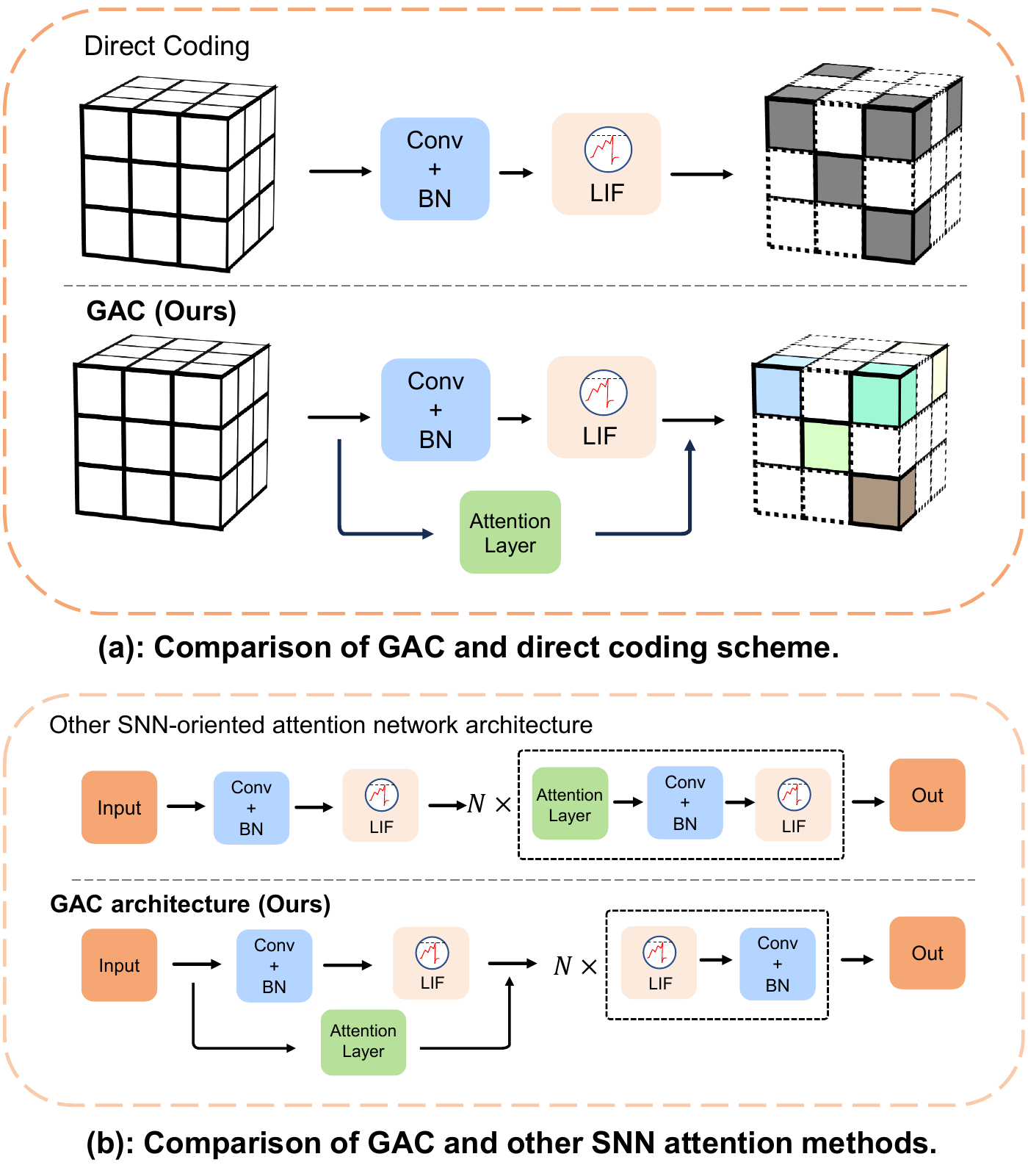}
\vspace{-3mm}
\caption{How our Gated Attention Coding (GAC) differs from existing SNNs' coding ~\cite{wu2019direct} and attention methods ~\cite{yao2021temporal, yao2023attention}. In (a), the solid-colored cube represents the float values, the gray cube denotes the binary spike values, and the cube with the dotted line represents the sparse values. In comparison with direct coding, GAC  generates spatio-temporal dynamics output with powerful representation. In (b), compared to other attention methods, GAC only adds the attention module to the encoder without requiring  $N$ Multiply-Accumulation (MAC) blocks for dynamically calculating attention scores in subsequent layers.}
\vspace{-4mm}
\label{fig:top}
\end{figure}
To integrate SNNs into the realm of computer vision, the initial challenge lies in transforming static images into spatio-temporal feature sequences. Various coding schemes have emerged to address this issue, such as rate coding \cite{van2001rate}, temporal coding \cite{comsa2020temporal}, and phase coding \cite{kim2018deep}. Among these, direct coding \cite{wu2019direct} as shown in Fig. \ref{fig:top}-(a), excel in training SNNs on large-scale datasets with minimal simulation time steps. 
% Recent SNN models incorporate diverse architectures such as spiking-ResNet~\cite{fang2021deep, hu2021advancing}, and spiking attention network~\cite{yao2021temporal,yao2023attention, zhu2022tcja}.
% , and spiking vision Transformer~\cite{zhou2023spikformer}
% These models predominantly adopt direct coding as the prevailing coding scheme in the field of SNNs.
Moreover, by adopting direct coding, recent SNN models (\citealt{li2021differentiable}; \citealt{shenESL}; \citealt{zhou2023spikformer}) achieve state-of-the-art performance across various datasets, showcasing the immense potential of coding techniques.
However, this approach utilizes a trainable layer to generate float values repetitively at each time step. The repetitive nature of direct coding leads to periodic identical outputs at every time step,  generating powerless spike representation and limiting spatio-temporal dynamics. In addition, the repetitive nature of direct coding fails to generate the temporal dynamics inherent in human vision, which serves as the fundamental inspiration for SNN models. Human vision is characterized by its ability to process and perceive dynamic visual stimuli over time. The repetitive nature of direct coding falls short in replicating this crucial aspect, emphasizing the need for alternative coding schemes that can better emulate the temporal dynamics observed in human vision. 
\par
Humans can naturally and effectively find salient regions in complex scenes \cite{itti1998model}. Motivated by this observation, attention mechanisms have been introduced into deep learning and achieved remarkable success in a wide spectrum of application domains, which is also worth to be explored in SNNs as shown in Fig. \ref{fig:top}-(b) \cite{yao2021temporal,yao2023attention}. However, it has been observed that implementing attention mechanisms to directly modify membrane potential and dynamically compute attention scores for each layer, rather than using static weights, disrupts the fundamental asynchronous spike-driven communication in these methods. Consequently, this approach falls short of providing full support for neuromorphic hardware. 
\par
 In this paper, we investigate the shortcomings of traditional direct coding and introduce an innovative approach termed Gated Attention Coding (GAC)  as depicted in Fig. \ref{fig:top}. Instead of producing periodic and powerless results, GAC leverages a multi-dimensional attention mechanism for gating to elegantly generate powerful temporal dynamic encodings from static datasets.  As a preprocessing layer, GAC doesn't disrupt the SNNs' spike-driven, enabling efficient neuromorphic hardware implementation with minimal modifications. Experimental results demonstrate that our GAC not only significantly enhances the performance of SNNs, but also notably reduces latency and energy consumption. Moreover, our main contributions can be summarized as follows:
\begin{itemize}
    \item  We propose an observer model to theoretically analyze direct coding limitations and introduce the GAC scheme, a plug-and-play preprocessing layer decoupled from the SNN architecture, preserving its spike-driven nature.
    \item We evaluate the feasibility of GAC and depict the encoding result under both direct coding and GAC setups to demonstrate the powerful representation of GAC and its advantage in generating spatio-temporal dynamics.
    \item We demonstrate the effectiveness and efficiency of the proposed method on the CIFAR10/100 and ImageNet datasets with spike-driven nature. Our method outperforms the previous state-of-the-art and shows significant improvements across all test datasets within fewer time steps. 
\end{itemize}
 % In this paper, we investigate the shortcomings of traditional direct coding and introduce an innovative approach termed Gated Attention Coding (GAC)  as depicted in Fig. \ref{fig:top}. Instead of exhibiting the periodic and powerless outputs, GAC makes full use of every moment’s input, employing a multi-dimensional attention mechanism for gating and it elegantly generates powerful spatio-temporal dynamics encoding results for static datasets while minimizing redundancy.  Experimental results demonstrate that our GAC not only significantly enhances the performance of SNNs, but also notably reduces latency and energy consumption. Moreover, our main contributions can be summarized as follows: Moreover,  As a preprocessing layer, GAC does not disrupt the spike-driven SNN, enabling easy neuromorphic hardware implementation.
 
\section{Related Works}
\subsection{Bio-inspired Spiking Neural Networks} 
\vspace{-1mm}
Spiking Neural Networks (SNNs) offer a promising approach to achieving energy-efficient intelligence. These networks aim to replicate the behavior of biological neurons by employing binary spiking signals, where a value of 0 indicates no activity and a value of 1 represents a spiking event. The spike-driven communication paradigm in SNNs is inspired by the functionality of biological neurons and holds the potential for enabling energy-efficient computational systems ~\cite{roy2019towards}. By incorporating advanced deep learning and neuroscience knowledge, SNNs can offer significant benefits for a wide range of applications \cite{jin2022sit, qiu2023when, qiu2023vtsnn}. Recently, there exist two primary methods of training high-performance SNNs. One way is to discretize ANN into spike form through neuron equivalence \cite{li2021free, hao2023reducing}, i.e., ANN-to-SNN conversion, but this requires a long simulation time step and boosts the energy consumption. We employ the direct training method \cite{wu2018spatio} and apply surrogate gradient training. 

\vspace{-2mm}
\subsection{SNN Coding Schemes}
\vspace{-1mm}
Numerous coding schemes are proposed for image classification tasks. Phase coding \cite{kim2018deep} used a weighted spike to encode each pixel and temporal coding (\citealt{park2020t2fsnn}; \citealt{comsa2020temporal}; \citealt{zhou2021temporal})  represents information with the firing time of the first neuron spike. These methods have been successfully applied to simple datasets with shallow networks, but achieving high performance becomes more challenging as datasets and networks become larger and more complex. To address this issue, rating coding \cite{van2001rate}, which encodes each pixel using spike firing frequency, has been suggested. However, it suffers from long time steps to remain high performance, while small time steps result in lower representation resolution. To overcome these limitations, \citeauthor{wu2019direct} (\citeyear{wu2019direct}) proposed the direct coding, in which input is given straight to the network without conversion to spikes and image-spike encoding is done by the first \{\textit{Conv-BN}\} layer. Then repeat this procedure at each time step and feed the results to spiking neurons. Finally, these encoded spikes will be sent to the SNNs' architecture for feature extraction.  However, the limited powerless spike representation in SNNs using direct coding leads to parameter sensitivity and subpar performance. The repetition operation fails to generate dynamic output and neglects redundant data, thus underutilizing the spatio-temporal extraction ability of subsequent SNN architectures and increasing energy consumption in neuromorphic hardware.
% \par

\begin{figure*}[!t]
\centering
\includegraphics[scale=0.122]{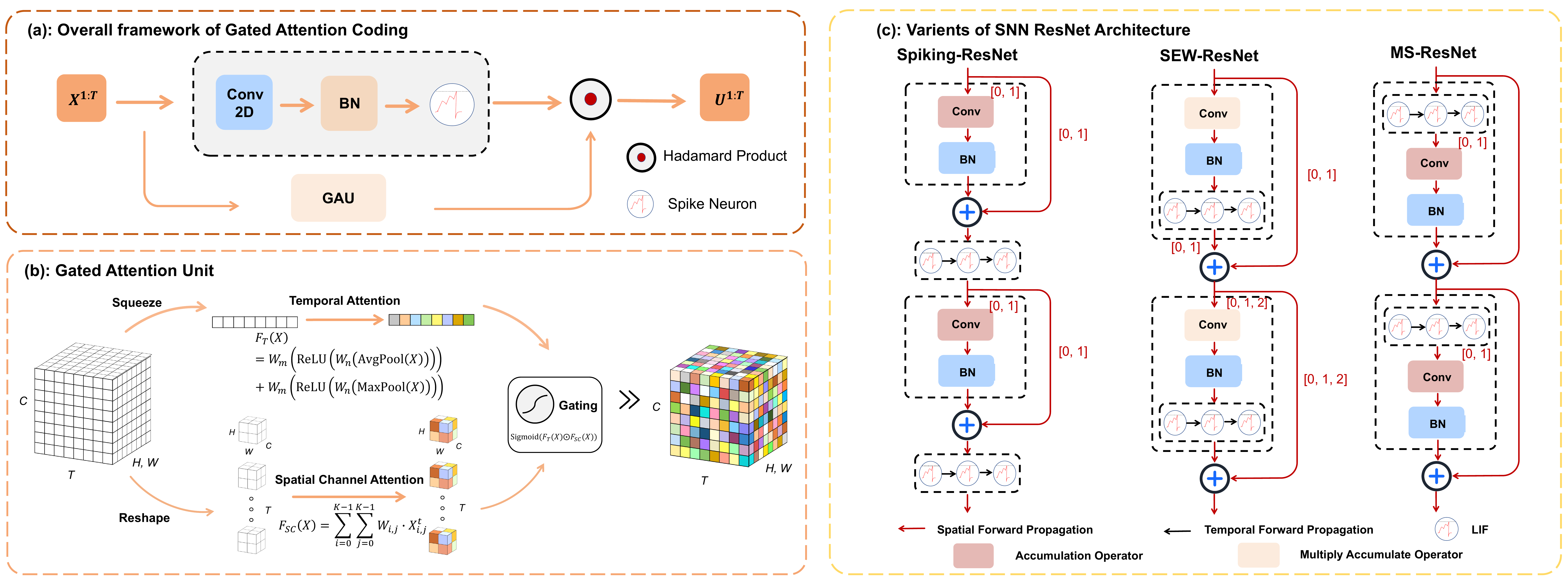}
\vspace{-3mm}
\caption{The GAC-SNN framework consists of two main components: an encoder and an architecture. In (a), we introduce the encoder, i.e., the  GAC module. (b) focuses on the GAU, which acts as the fundamental building block of the GAC layer. It comprises Temporal Attention, Spatial Channel Attention, and Gating sub-modules. (c) Common SNN ResNet architectures. The  Conv layer in SEW-ResNet uses a multiply-accumulate operator, not spike computations. Spiking-ResNet retains its spike-driven nature via direct coding, while GAC disrupts it. More details can be seen in discussions. MS-ResNet avoids floating-point multiplications, preserving its spike-driven nature. Hence, we use the MS-ResNet to benefit from neuromorphic  implementations.}
\vspace{-4mm}
\label{fig:main}
\end{figure*}

\vspace{-2mm}
\subsection{Attention Mechanism}
\vspace{-1mm}
Initially introduced to enhance the performance of sequence-to-sequence tasks \cite{bahdanau2014neural}, the attention mechanism is a powerful technique that enables improved processing of pertinent information \cite{ioffe2015batch}. By effectively filtering out distracting noise, the attention mechanism facilitates more focused and efficient data processing, leading to enhanced performance in various applications.
\citeauthor{yao2021temporal} (\citeyear{yao2021temporal}) attach the squeeze-and-excitation \cite{hu2018squeeze} attention block to the temporal-wise input of SNN, assessing the significance over different frames during training and discarding irrelevant frames during inferencing. 
However, this method only gets better performance on small datasets with shallow networks. \citeauthor{yao2023attention} (\citeyear{yao2023attention})  switch CBAM attention \cite{woo2018cbam} to multi-dimension attention and inject it in SNNs architecture,  revealing deep SNNs' potential as a general architecture to support various applications. Currently, integrating attention blocks into SNN architectures using sparse addition neuromorphic hardware poses challenges, as it necessitates creating numerous multiplication blocks in subsequent layers to dynamically compute attention scores, which could impede the spike-driven nature of SNNs. A potential solution to address this issue involves confining the application of attention mechanisms solely to the encoder, i.e., the first layer of the SNNs. By limiting the attention modifications to the initial stage, the subsequent layers can still maintain the essential spike-driven communication. This approach holds promise in enabling a more feasible implementation of  SNNs on neuromorphic hardware, as it mitigates the incompatibility arising from dynamic attention mechanisms throughout the architecture.
 % Currently, integrating attention blocks into SNN architectures is difficult because it necessitates designing separate multiplicative modules in subsequent layers to dynamically calculate attention scores. This impedes the spike-driven nature of inherently additive SNNs. Hence, the seamless integration of SNNs with neuromorphic hardware is hindered, primarily due to the hardware's limited support for static weights, which is a key requirement for conventional attention implementations.

% Presently, the incorporation of attention blocks into the architecture of  SNNs poses a challenge as it disrupts the inherent spike-driven nature. This disruption arises from the dynamic modification of the membrane potential through the application of attention mechanisms at each layer.

% Consequently, the seamless integration of SNNs with neuromorphic hardware is hindered, primarily due to the hardware's limited support for static weights, which is a key requirement for conventional attention implementations.
\section{Method}
In this section,  we first introduce the iterative spiking neuron model. Then we proposed the  Gated Attention Coding  (GAC) and Gated Attention Unit (GAU) as the basic block of it. Next, we provide the overall framework for training GAC-SNNs. Moreover, we conduct a comprehensive analysis of the direct coding scheme and explain why our GAC outperforms in generating spatio-temporal dynamics encoding results.
\vspace{-3mm}
\subsection{Iterative Spiking Neuron Model}
\vspace{-1mm}
We adopt the Leaky Integrate-and-Fire (LIF)  spiking neuron model and translate it to an iterative expression with
the Euler method (\citealt{wu2018spatio}; \citealt{yao2021temporal}). Mathematically,  the LIF-SNN layer can be described as an explicitly iterable version for better computational traceability:
% \subsection{Conv-based LIF-SNN Layer}
% Solving this differential Eq. \ref{diff}, the LIF-SNN layer can be described as an explicitly iterable version for better computational traceability (\citealt{wu2018spatio}: \citealt{yao2021temporal}):
\begin{equation}
\begin{cases}
\boldsymbol{U}^{t,n}=\boldsymbol{H}^{t-1,n}+f(\boldsymbol{W^n},\boldsymbol{X}^{t,n-1}) \\
\boldsymbol{S}^{t,n}=\Theta(\boldsymbol{U}^{t,n}-\boldsymbol{V}_{th})\\
    \boldsymbol{H}^{t,n}= \tau \boldsymbol{U}^{t,n}\cdot (1-\boldsymbol{S}^{t,n})+\boldsymbol{V}_{reset}\boldsymbol{S}^{t,n},
\end{cases}
\label{eq:lif}
\end{equation}
where $\tau$ is the time constant, $t$ and $n$ respectively represent the indices of the time step and the $n$-th layer, $\boldsymbol W$ denotes synaptic weight matrix between two adjacent layers, $f(\cdot) $ is the function operation stands for convolution or fully connected, $\boldsymbol X$ is the input, and  $\boldsymbol{\Theta(\cdot)}$ denotes the Heaviside step function. When the membrane potential $\boldsymbol{U}$ exceeds the firing threshold $\boldsymbol{V}_{th}$, the LIF neuron will trigger a spike $\boldsymbol S$. Moreover,  $\boldsymbol H$ represents the membrane potential after the trigger event which equals to $\tau  \boldsymbol{U} $ if no spike is generated and otherwise equals to the reset potential $\boldsymbol{V}_{reset}$.
\vspace{-2.5mm}
\subsection{Gated Attention Unit (GAU)}
\vspace{-1mm}
\textbf{Temporal Attention.} To establish temporal-wise
relationships between SNNs' input, we first perform the squeezing step on the spatial-channel feature map of the repeated input $\boldsymbol{X} \in \mathbb{R}^{T\times C \times H \times W}$, where $T$ is the simulation time step and $C$ is the channel size.  
Then we use Avgpool and Maxpool $\in \mathbb{R}^{T\times 1 \times 1 \times 1}$ to calculate the maximum and average of the input in the last three dimensions. Additionally, we use a shared MLP network to turn both average-pooled and max-pooled features into a temporal  weight vector $\boldsymbol{M}\in \mathbb{R}^{T}$, i.e.,  
\begin{equation}
\begin{aligned}
\label{eq1}
    \mathcal{F}_{T}( \boldsymbol{X})=& \left (\boldsymbol{W}_{m}(\mathrm{ReLU}(\boldsymbol{W}_{n}(\mathrm{AvgPool}(\boldsymbol{X})))\right)  \\&
    +\boldsymbol{W}_{m}(\mathrm{ReLU}(\boldsymbol{W}_{n}(\mathrm{MaxPool}(\boldsymbol{X})))),
\end{aligned}
\end{equation}
where $\mathcal{F}_{T}(\cdot)$ is the functional operation of temporal attention. And $\boldsymbol{W}_{m}\in\mathbb{R}^{T\times\frac{T}{r}}$, $\boldsymbol{W}_{n} \in\mathbb{R}^{\frac{T}{r}\times T}$ are the weights of two shared dense layers. Moreover, $r$  is the temporal dimension reduction factor used to manage its computing overhead. 
\par
% To make full use of the spatial channel dynamics information between SNNs' input $\boldsymbol{X} = [\boldsymbol{X}^{1}, \boldsymbol{X}^{2}, \cdots, \boldsymbol{X}^{t}]$,
\textbf{Spatial Channel Attention.} To generate the spatial channel dynamics for  encoding result, we  use a shared 2-D convolution operation at each time step to get the spatial channel matrix $\boldsymbol{N} = [\boldsymbol{N}^{1}, \boldsymbol{N}^{2}, \cdots, \boldsymbol{N}^{t}] \in \mathbb{R}^{T\times C \times H \times W}$, i.e.,
\begin{equation}
    \mathcal{F}_{SC}(\boldsymbol{X}) =\sum^{K-1}_{i=0}\sum^{K-1}_{j=0}\boldsymbol{W}_{i,j} \cdot \boldsymbol{X}_{i,j}^{t},
\end{equation}
where $\mathcal{F}_{SC}(\cdot)$ is the functional operation of spatial channel attention, $\boldsymbol{W}_{i,j}$ is the learnable parameter and $K$ represents the size of the 2-D convolution kernel size. 
\par
\textbf{Gating.}
After the above two operations, we get the temporal vector $\boldsymbol{M} $ and spatial channel matrix $\boldsymbol{N}$. Then to extract SNNs' input $\boldsymbol{X}$ temporal-spatial-channel fused dynamics features, we first broadcast the temporal vector  to  $\mathbb{R}^{T\times 1\times1 \times 1}$ and gating the above result by :
\begin{equation}
    \mathcal{F}_{G}(\boldsymbol{X})  = \sigma(\boldsymbol{M} \odot \boldsymbol{N})=\sigma(\mathcal{F}_{T}(\boldsymbol{X}) \odot \mathcal{F}_{SC}(\boldsymbol{X})),
\end{equation}
where  $\sigma(\cdot)$ and $\odot$ are the Sigmoid function and Hadamard Product.  By the above three sub-modules, we can get the functional operation $\mathcal{F}_{G}(\cdot)$  of GAU, which is the basic unit of the next novel coding.
\vspace{-2mm}
\subsection{Gated Attention Coding  (GAC)}
\vspace{-1mm}
Compared with the previous direct coding (\citealt{wu2019direct}; \citealt{hu2021advancing}), we introduce a novel encoding called Gated Attention Coding  (GAC).  And Fig. \ref{fig:main} describes the specific process.  Given that the input $\boldsymbol X \in \mathbb{R}^{C \times H \times W} $ of static datasets, we assign the first layer as an encoding layer. And we first use the convolution layer to generate features, then repeat this
procedure after each time step and feed the results to the LIF model and the GAU module respectively. Finally, gating the output of the above two modules. So to this end, the whole  GAC process can be described as:
\begin{equation}
    \label{eq9}
     \boldsymbol{O} = \mathcal{F}_{G}(f^{k\times k}(\boldsymbol{X}))  \odot \mathcal{SN}(f^{k\times k}(\boldsymbol{X})),
\end{equation}
where $\boldsymbol{X}$ and $\boldsymbol{O}$ is the GAC-SNN's input and output, $f^{k\times k}(\cdot)$ is a shared 2-D convolution operation with the filter size of $k\times k$, and $\mathcal{SN}(\cdot)$ is the spiking neuron model. Moreover, $\odot$ is the  Hadamard Product, and $ \mathcal{F}_{G}(\cdot)$ is the functional operation of GAU, which can fuse temporal-spatial-channel information for better encoding feature expression capabilities.
\vspace{-2mm}
\subsection{Overall Training Framework}
\vspace{-1mm}
We give the overall training algorithm of
GAC for training deep SNNs from scratch with our GAC and spatio-temporal backpropagation (STBP) \cite{wu2018spatio}. In the error backpropagation, we suppose the last layer as
the decoding layer, and the final output $\boldsymbol K$ can be determined by: $\boldsymbol K = \frac{1}{T} \sum_{t=1}^{T} \boldsymbol O^{t} $, where $\boldsymbol O^{t} $ is the SNNs' output of the last layer and $T$ is the time steps. Then we calculate the cross-entropy loss function \cite{rathi2020diet} between output and label, which can be described as:
\begin{alignat}{1}
   q_i&=\frac{e^{k_i}}{\sum_{j=1}^ne^{k_j}},
    \\
    \mathcal{L}&=-\sum_{i=1}^n y_i log(q_i),
\end{alignat}
where $\boldsymbol K$ = $(k_{1}, k_{2}, \cdots , k_{n})$  and $\boldsymbol Y$ = $(y_{1}, y_{2}, \cdots , y_{n})$ are the output vector and  label vector.  Moreover, the codes of the overall training algorithm can be found in \textbf{Supplementary Material A.}
\begin{figure*}[!t] \centering       
\includegraphics[width=1.6\columnwidth]{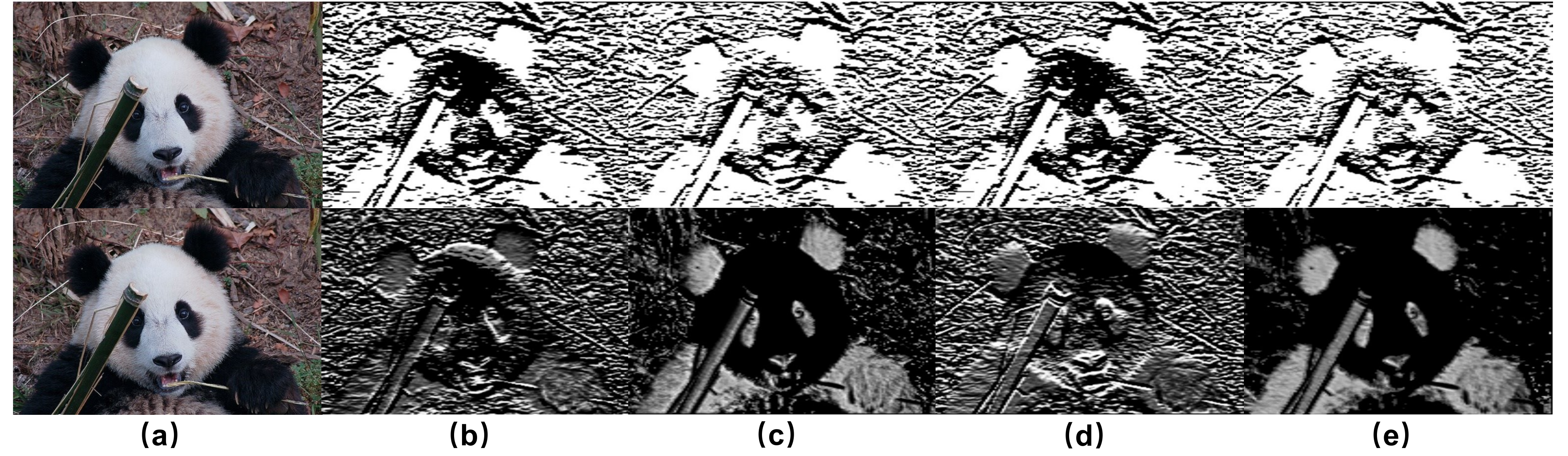} 
\vspace{-3mm}
\caption{Visualization results. (a) Original image. (b)(c)(d)(e) Encoding results of the direct coding (top) and GAC (bottom) at different time steps. Compared to direct coding, GAC enhances dynamics by introducing variations at each time step.}  
\label{vis}  
\vspace{-4mm}
\end{figure*}
\vspace{-2mm}
\subsection{Theoretical Analysis}
\vspace{-1mm}
To understand the highlights of our proposed method and the role of SNN encoders, we introduce the observer model for measuring SNN coding. Encoders are used to convert static images into feature sequences, incorporating temporal information into SNNs at each time step. Some encoders are embedded within the SNN as part of it (e.g., the first Conv-based spiking neuron layer for direct coding), while others are not included in the SNN models, e.g., rate coding. The embedded encoders can be easily distinguished from the rest of the network since they use actual values for linear transformations, unlike spikes or quantized data. Functionally, encoders convert static data into the temporal dimension. This definition helps us understand what SNN encoders are.
\begin{myDef}
    \label{def1}
    \textbf{Encoder.} An encoder in SNNs for image classification tasks is used to translate static input $\boldsymbol{X} \in \mathbb{R}^{ C_{in}\times H\times W}$ into dynamics feature sequences $\boldsymbol{A} \in \mathbb{R}^{ T\times C_{out}\times H\times W}$.
\end{myDef}
 Moreover, two points should be noted in \textbf{Definition} 1. Firstly, $\boldsymbol{A}$ is used to indicate the encoders' output no matter spikes or real values. And Membrane Shortcut (MS) ResNet architecture \cite{hu2021advancing} is considered to use sequential real values as input after the encoder. Secondly, although two dimensions ($C_{out}, T$) are changed, the spatial one $C_{out}$ is similar to the operation in ANNs, which means $T$ is unique for SNN encoders. In other words, the time step $T$ is the secret of the time information, and the SNN encoder is designed to generate this added dimension. The discussion above implies that to understand and metric an encoder, we should focus on its temporal dimension and find a proper granularity.
\begin{myDef}
    \label{def2}
    \textbf{Neuron Granularity.} Considering $\boldsymbol{A}\in\mathbb{R}^{T\times C_{out}\times H\times W}$ is output feature sequences of the encoder, given a fixed position $c, h, w$ for the output $\boldsymbol{A}$, so that we get a vector along temporal axis $a = \boldsymbol{A}_{:, c, h, w}=\left[a^{1}, a^{2}, \cdots, a^{t}\right]$.
\end{myDef}
Here the encoding feature vector $a$  is not subscripted in \textbf{Definition} 2 because the encoder is usually symmetric, and the choice of position for analysis does not affect its generality. To measure the vector $a$, we introduce the observer model and information entropy \cite{ash2012information}. Assuming an observer is positioned right behind the encoder, recording and predicting the elements of vector $a$  in a time-ordered manner. Hence, the observer model can be formally established as follows:
% Here the vector $a$ is not subscripted since the encoder is usually symmetry and no matter what each position is chosen for analysis, the generality is not lost. To metric the vector $a$, the observer model and information entropy \cite{ash2012information} are then introduced. Since the sequential data pops out in a time order, supposing an observer is right behind the encoder, recording and predicting the elements of vector $a$ in a time order. Hence, the observer model can be formally established as follow:
\begin{itemize}
    \item  The observer notices the elements of encoded feature sequences $a \in \mathbb{R}^{T}$ or $\{0, 1\}^{T}$ in time step order.
    \item At any time step $t$, the observer remembers all elements from time $1$ to time $t-1$, and it is guessing the element $a^{t}$ of encoded feature sequences $a \in \mathbb{R}^{T}$.
    \item The observer is aware of the mechanism or the encoder structure, but not its specific parameters.
\end{itemize}
\par
Moreover, at time step $t$, guessing $a^{t}$, the observer should answer it with probability. The probability can be described as:
\begin{equation}
    p^{t}(a^{t})=p( a^{t}|a^{1}, \cdots, a^{t-1}),
\end{equation}
\vspace{-1mm}
And we  use the information entropy to meter the quantity of information gotten by the observer, i.e.,
\begin{equation}
    \mathcal H(\boldsymbol V^{t})=\sum_{t} p^{t}(a^{t})\log ( p^{t}(a^{t})),
    \vspace{-2mm}
\end{equation}
where $\boldsymbol V^{t}$ is used to indicate the random variable version of $a^{t}$. Specifically, when $p^{t}(a^{t})=0$ or  $1$, $p^{t}(a^{t})\log ( p^{t}(a^{t}))=0$ it means that a deterministic event contribute $0$ to information entropy. Moreover, for the observer model, when the element $a^{t}$ is deterministic, there is no additional information that deserves observing at time $t$.
\par
To better understand the concept of information entropy in this context, it is crucial to consider the role of an encoder whose task is to convert information into tensors that generate temporal dynamics. Ideally, the encoder should utilize as numerous time steps as possible to code information, resulting in a positive information entropy along the time axis. The positive entropy indicates the presence of information, which is crucial for spiking neural networks. While it is difficult to assign a precise value to the entropy, it is possible to measure the duration of positive entropy. In this way, longer-lasting positive entropy can be considered a more effective use of the temporal dimension.
\par
\begin{myDef}
\label{def3}
\textbf{Dynamics Loss \& Dynamics Duration.} Considering a specific position with encoded feature vector $a = [a^{1}, a^{2}, \cdots, a^{t}]$ alone temporal dimension, if there exists $t_{e}$ for all $t$, $t>t_{e}$, the observer mentioned above have an entropy $\mathcal{H}(\boldsymbol V^{t})=0$. Then we call the moment $t$ after $t_{e}$ is \textbf{Dynamics Loss}. And for $\boldsymbol T_{e}=\inf{(t_{e})}$, we call it \textbf{Dynamics Duration}.
\end{myDef}
\textbf{Definition} 3 delineates the encoder's effective encoding range. Dynamics Duration indicates when coding entropy $\mathcal{H}(\boldsymbol V^{t})\geq$ 0. At Dynamics Loss time steps, the entropy  $\mathcal{H}(\boldsymbol V^{t})$  drops to 0, rendering encoding unnecessary.  Moreover, to metric the encoder, the key is to find and compare the dynamics duration time step $\boldsymbol T_{e}$.
% Proposition 1 The \textbf{Definition}\ref{def3} distinguishes the effective range of the sequence encoded by the encoder. Dynamics Duration denotes the interval where the entropy of possible coding results exceeds 0, containing more meaningful encodings. For an observer model, at the time step of Dynamics Loss, the information entropy is 0, which means that encoding is dispensable at these moments. Moreover, to metric the encoder, the key is to find and compare the dynamics duration.
\begin{myproposition}
\label{lemma1}
% T_{e}
    Given same \{\textit{Conv-BN}\} parameters, denoting the dynamic duration of GAC as $\boldsymbol T_{g}$ and direct coding's as $\boldsymbol T_{d}$, and $\boldsymbol T_{g} \geq \boldsymbol T_{d}$
\end{myproposition}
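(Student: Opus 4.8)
The plan is to reduce $\boldsymbol{T}_g \geq \boldsymbol{T}_d$ to a single per-step comparison of observer entropies and then to exploit the multiplicative structure of the GAC output. First I would unfold Definition 3: for either encoder the dynamics duration is
$\boldsymbol{T}_e=\inf\{\,t_e:\ \mathcal{H}(\boldsymbol{V}^t)=0\ \text{for all}\ t>t_e\,\}$,
and the set of admissible thresholds $t_e$ is upward closed (if $t_e$ works, so does any larger value). Writing $\mathcal{T}_g$ and $\mathcal{T}_d$ for these threshold sets, the claim is equivalent to the inclusion $\mathcal{T}_g\subseteq\mathcal{T}_d$, since the infimum of a subset is at least the infimum of its superset. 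That inclusion follows in turn from the pointwise implication that $\mathcal{H}_g(\boldsymbol{V}^t)$ vanishes only if $\mathcal{H}_d(\boldsymbol{V}^t)$ vanishes; equivalently, whenever direct coding is still dynamic at time $t$ (its output non-deterministic to the observer), GAC must be dynamic too.

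Next I would make the structural relation between the two encoders explicit. Fixing a position $c,h,w$ (using Neuron Granularity) and the same \{\textit{Conv-BN}\} parameters, the spike stream GAC gates, $s^t=\mathcal{SN}(f^{k\times k}(\boldsymbol{X}))^t$, is exactly the direct-coding output at that position, so the direct observer predicts $a^t=s^t$. By the GAC definition \eqref{eq9}, the GAC observer instead predicts $\boldsymbol{O}^t=g^t\,s^t$ with gate $g^t=\mathcal{F}_{G}(f^{k\times k}(\boldsymbol{X}))^t=\sigma(\boldsymbol{M}^t\odot\boldsymbol{N}^t)$. The elementary fact I would lean on is that $\sigma(\cdot)\in(0,1)$ never attains $0$ or $1$, so $g^t>0$ strictly; consequently $s^t=\mathbf{1}\{\boldsymbol{O}^t>0\}$ is recoverable from $\boldsymbol{O}^t$, and the gated history determines the spike history.

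The core step is the propagation of non-determinism through the strictly positive gate. If $\mathcal{H}_d(\boldsymbol{V}^t)\neq 0$, then conditioned on the observed past the spike $s^t$ is non-deterministic, i.e. both $s^t=0$ and $s^t=1$ occur with positive probability under the observer's residual parameter uncertainty. Because $g^t>0$, the map $s^t\mapsto g^t s^t$ sends $0$ and $1$ to the two distinct values $0$ and $g^t$, so $\boldsymbol{O}^t$ likewise takes at least two admissible values and cannot be deterministic; hence $\mathcal{H}_g(\boldsymbol{V}^t)\neq 0$. This is exactly the implication needed, and combined with the set-inclusion argument it yields $\boldsymbol{T}_g\geq\boldsymbol{T}_d$.

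The delicate point I expect to be the main obstacle is the conditioning mismatch inside the observer model: the GAC observer conditions on the richer gated history $\boldsymbol{O}^{1},\dots,\boldsymbol{O}^{t-1}$, which also reveals the gate values $g^k$ on spiking steps, whereas the direct observer sees only the bare spikes $s^{1},\dots,s^{t-1}$. A priori this extra information about the temporal-spatial-channel attention could sharpen the posterior over parameters and collapse the uncertainty in $s^t$, breaking the propagation step. I would close this gap by using that the gate is driven by its own learnable weights ($\boldsymbol{W}_m,\boldsymbol{W}_n$ and $\boldsymbol{W}_{i,j}$) which are parameter-decoupled from the LIF threshold and leak dynamics generating $s^t$; under this independence, observing the $g^k$ gives no predictive leverage on $s^t$, so its uncertainty is undiminished and the two-value argument stands. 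Formalising and justifying this decoupling is the crux of the proof.
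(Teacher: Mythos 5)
Your route is genuinely different from the paper's. The paper argues by explicit computation: it derives the period of the LIF neuron under the constant input $x_{i,j}$ produced by the repeated \{\textit{Conv-BN}\} output, namely $\boldsymbol T_{p}=\lceil\log_{\tau}(1-\boldsymbol V_{th}(1-\tau)/x_{i,j})\rceil$, concludes that the direct-coding spike train is periodic and hence fully predictable once the observer sees the first spike (so $\boldsymbol T_{d}=\boldsymbol T_{p}$), and then notes that the time-varying gate renews the observer's uncertainty at every subsequent spike time, giving $\boldsymbol T_{g}=\lfloor\boldsymbol T/\boldsymbol T_{d}\rfloor\boldsymbol T_{d}\geq\boldsymbol T_{d}$. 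You instead prove a pointwise entropy dominance ($\mathcal{H}_d(\boldsymbol V^t)>0\Rightarrow\mathcal{H}_g(\boldsymbol V^t)>0$) and convert it into the inequality via the upward-closed threshold sets. That reduction is clean and buys generality --- it does not require knowing either duration explicitly --- whereas the paper's computation buys an exact value for $\boldsymbol T_{g}$ (and is what actually justifies the ``$\geq$'' when one wants to know by how much GAC wins).

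There is, however, a genuine gap at the step you yourself flag as the crux, and the repair you sketch does not work as stated. You argue that the gate values $g^k$ revealed in the GAC observer's history carry no information about $s^t$ because the GAU weights $\boldsymbol W_m,\boldsymbol W_n,\boldsymbol W_{i,j}$ are ``parameter-decoupled'' from the LIF dynamics. But the gate is $\mathcal{F}_{G}(f^{k\times k}(\boldsymbol X))$: its input is the \emph{same} \{\textit{Conv-BN}\} feature map that drives the spiking neuron, so the gate signal and the spike signal are statistically coupled through $f^{k\times k}(\boldsymbol X)$ regardless of whether their weights are disjoint; independence of parameters does not give independence of the two observed streams, and observing $g^k$ could in principle sharpen the posterior on $x_{i,j}$ and hence on the spike period. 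The gap closes much more easily by a different observation: under the paper's model, $\mathcal{H}_d(\boldsymbol V^t)>0$ only \emph{before} the first spike (this is exactly the periodicity fact the paper proves), and up to the first spike both observers have seen identical all-zero histories, so the conditioning mismatch is vacuous precisely on the set of times where your implication needs to hold; the strictly-positive-gate argument then finishes it. In other words, your proof still needs the paper's periodicity computation (or an equivalent characterisation of when direct coding is non-deterministic) as an input --- it does not fully replace it.
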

\vspace{-3mm}
\begin{proof} 
Denoted that $\boldsymbol X\in \mathbb{R}^{T\times C\times H\times W} $ is the repetitive output after \{\textit{Conv-BN}\} module, $a = [a^{1}, a^{2}, \cdots, a^{t}]$ is  the encoded feature vector. 
\par
For direct coding, it sends the repetitive output $\boldsymbol X$ to the spiking neuron for coding, resulting in the encoded feature vector $a$ being powerless and periodic with 0 or 1.
% For direct coding, it sends the repetitive results $X$ to the spiking neuron for coding, and its output encoding feature vector $a$ is periodic, with 0 or 1.
% For direct coding where repetitive input is given for the spiking neuron for the last step of coding, the output encoded feature vector $a$ is periodic with 0 or 1. 
Moreover, the period $\boldsymbol T_{p}$ is: 
\vspace{-1mm}
\begin{equation}
    \boldsymbol T_{p}=\lceil\log_{\tau}(1 - \frac{\boldsymbol V_{th}(1- \tau )}{x_{i,j}})\rceil,
\end{equation}
where $\tau$ is the  time constant, $\boldsymbol V_{th}$ is the firing threshold and $x_{i,j}$ is the pixel of the input $\boldsymbol X$ after \{\textit{Conv-BN}\} module.  Hence, the subsequent output is predictable when the observer has found the first spike. Thus, the direct coding's dynamic duration  $\boldsymbol T_{d}=\boldsymbol T_{p}$. Moreover, the derivation of $\boldsymbol{T}_p$ and analysis of other coding schemes' dynamics can be found in \textbf{Supplementary Material B}.
\par
For GAC, we multiplied the output of direct coding and GAU to expand the dynamic duration of the encoding results. Thus, the  GAC's dynamic duration $\boldsymbol T_{g}=\lfloor\frac{\boldsymbol T}{\boldsymbol T_{d}}\rfloor \boldsymbol T_{d}$. It can be seen that $\boldsymbol T_{g} \geq \boldsymbol T_{d}$.
% Furthermore, if $\boldsymbol T\gg \boldsymbol T_{d}$, then $\boldsymbol T_{g}=\lfloor\frac{\boldsymbol T}{\boldsymbol{T}_{d}}\rfloor \boldsymbol{T}_{d}\approx \boldsymbol{T}$ and $T_{g}\gg T_{d}$
% For our GAC, the periodic output from the direct coding-like structure is Hadamard produced by real value. The operation devastates the periodicity of output.
\end{proof}
\vspace{-2mm}
According to \textbf{Proposition} \ref{lemma1}, GAC lasts its dynamics longer than direct coding. Moreover, this reflects the superiority of GAC in generating dynamic encoding results. As depicted in Fig. \ref{vis}, GAC's encoding results on static datasets vary significantly at each time step, i.e., temporal dynamics.
\vspace{-2mm}
\subsection{Theoretical  Energy Consumption Calculation}
\vspace{-1mm}
% GAC controls membrane potentials and reduces spike firing rates of the post-encoding layer in GAC-SNN. It achieves this by transforming matrix multiplication into sparse addition, which can be implemented as addressable addition on neuromorphic chips.
 The GAC-SNN architecture can transform matrix multiplication into sparse addition, which can be implemented as addressable addition on neuromorphic chips.
In the encoding layer, convolution operations serve as MAC operations that convert analog inputs into spikes, similar to direct coding-based SNNs \cite{wu2019direct}. Conversely, in SNN's architecture, the convolution (Conv) or fully connected (FC) layer transmits spikes and performs AC operations to accumulate weights for postsynaptic neurons. Additionally, the inference energy cost of GAC-SNN can be expressed as follows:
\vspace{-2mm}
\begin{equation}
\label{energy}
    \begin{aligned}
E_{total}&=E_{MAC}\cdot FL_{conv}^1+\\&E_{AC}\cdot T \cdot(\sum_{n=2}^N FL_{conv}^n \cdot fr^{n} +\sum_{m=1}^M FL_{fc}^m \cdot fr^{m}),
\end{aligned}
\end{equation}
where $N$ and $M$ are the total number of Conv and FC layers, $E_{MAC}$ and $E_{AC}$ are the energy costs of MAC and AC operations, and $fr^{m}$, $fr^{n}$, $FL_{conv}^n$ and $FL_{fc}^m$ are the firing rate and FLOPs of the $n$-th Conv and $m$-th FC layer.  Previous SNN works  (\citealt{horowitz20141}; \citealt{rathi2020diet})  assume 32-bit floating-point implementation in 45nm technology, where $E_{MAC}$ = 4.6pJ and $E_{AC}$ = 0.9pJ for various operations.

\begin{table*}[!t]
   \centering
   \caption{Comparison between the proposed methods and previous works on CIFAR datasets. $^{\ddagger}$ denote self-implementation results with  open-source code \cite{hu2021advancing}.The "Params" column indicates network parameter size on  CIFAR10/100 datasets.}
   \vspace{-2.5mm}
   \begin{adjustbox}{max width=\linewidth} 
   \begin{tabular}{cccccc}
   \toprule 
   \multicolumn{1}{c}{ Methods} &\multicolumn{1}{c}{  Architecture}
&\multicolumn{1}{c}{  \tabincell{c}{Params\\ (M)}}
&\multicolumn{1}{c}{ \tabincell{c}{Time\\Steps}}
&\multicolumn{1}{c}{   \tabincell{c}{CIFAR10\\  Acc.(\%)}}
&\multicolumn{1}{c}{   \tabincell{c}{CIFAR100\\  Acc.(\%)}}\\
    \midrule
ANN2SNN \cite{hao2023reducing}\textit{\textsuperscript{AAAI}}                  & VGG-16   &    33.60/33.64               & 32               & 95.42  &  76.45               \\
tdBN  \cite{zheng2021going}\textit{\textsuperscript{AAAI}}                      & Spiking-ResNet-19     &    12.63/12.67               & 6               & 93.16 & 71.12                \\
% Dspikes \cite{li2021differentiable}\textit{\textsuperscript{NeurIPS}}                      & Spiking-ResNet-18     & 12.57               & 6               & 93.16 & 74.24                \\
    
TET \cite{deng2022temporal}\textit{\textsuperscript{ICLR}}                       & Spiking-ResNet-19    & 12.63/12.67                 & 6               & 94.50 & 74.72                \\
    
% RecDis \cite{guo2022recdis}\textit{\textsuperscript{CVPR}}                      & Spiking-ResNet-19     & 12.63                & 6               & 94.71 & 74.10                \\
   
GLIF \cite{yaoglif}\textit{\textsuperscript{NeurIPS}}                       &Spiking-ResNet-19  &  12.63/12.67                        & 6              & 95.03 &  77.35               \\
% Spikformer \cite{zhou2023spikformer}\textsuperscript{\textit{ICLR}}
%  & \ding{55}             & Spiking-ViT                     & 4               & 95.51 & 4&   78.21              \\
\midrule
$ \text{MS-ResNet}^{\ddagger}$ \cite{hu2021advancing}  
              & MS-ResNet-18     & 12.50/12.54                 & 6               &94.92&76.41             \\
  
    \cmidrule{2-6}
\multirow{3}{*}{\textbf{GAC-SNN}}       & MS-ResNet-18    & 12.63/12.67  
& 6 &\textbf{96.46}$\pm$0.06&\textbf{80.45}$\pm$0.27    \\

        &   MS-ResNet-18  &     12.63/12.67      & 4              &96.24$\pm$0.08  &79.83$\pm$0.15  \\
      & MS-ResNet-18  &    12.63/12.67   & 2                    &96.18$\pm$0.03 &78.92$\pm$0.10  \\
     \midrule
     $ \text{ANN}^{\ddagger}$ \cite{hu2021advancing} 
             & MS-ResNet-18       &   12.50/12.54               & N/A              &96.75 &80.67                \\
    \bottomrule
      \end{tabular}
\end{adjustbox}
\label{tab:CIFAR}
\vspace{-4mm}
\end{table*}

\vspace{-1mm}
\section{Experiments}
In this section, we evaluate the classification performance of GAC-SNN on static datasets, e.g., CIFAR10, CIFAR100, ImageNet (\citealt{li2017cifar10}; \citealt{krizhevsky2017imagenet}). To verify the effectiveness and efficiency of the proposed coding, we integrate the GAC module into the Membrane Shortcut (MS)  ResNet  \cite{hu2021advancing}, to see if the integrated architecture can generate significant improvement when compared with previous state-of-the-art works. Specifically, the details of the architecture are shown in Fig. \ref{fig:main}-(c), and why we use it is illustrated in the discussions. More details of the training details, datasets, hyper-parameter settings, convergence analysis, and trainable parameter analysis can be found in  \textbf{Supplementary Material B}.
\vspace{-2mm}
\subsection{GAC Can Produce Powerful and Dynamics Results}
\vspace{-1mm}
We evaluated GAC's effectiveness in reducing redundant temporal information and improving encoding results for static datasets. By training MS-ResNet-34 on ImageNet with and without GAC, we generated the encoding output shown in  Fig. \ref{vis}. And it can be seen that our GAC can help SNNs to capture more texture information. Hence, our approach enhances SNNs' representation ability and temporal dynamics by introducing significant variations in GAC results at each time step, compared to the periodic output of direct coding.
 % And it can be seen in GradCAM heat map results that our GAC can help SNNs to capture more texture information.
\vspace{-2mm}
\subsection{GAC Can Get Effective and Efficient SNNs}
\vspace{-1mm}
\textbf{Effectiveness.} The GAC-SNN demonstrate remarkable performance enhancement compared to existing state-of-the-art works (Tab. \ref{tab:CIFAR}-\ref{tab:imagenet}). On CIFAR10 and CIFAR100, GAC achieves higher accuracy than previous methods using only 2-time steps. With the same time steps, GAC improves 1.43\% and 3.10\% on CIFAR10 and CIFAR100 over GLIF \cite{yaoglif}. Moreover, compared to the baseline MS-ResNet \cite{hu2021advancing}, our method outperforms it on CIFAR10 and CIFAR100 by 1.54\% and 4.04\% with 6-time steps. For the larger and more challenging ImageNet dataset, compared
with the baseline MS-ResNet \cite{hu2021advancing}, we apply our GAC to MS-ResNet-18 and can significantly increase the accuracy (65.14\% v.s. 63.10\%). Compared with other advanced works \cite{fang2021deep,yao2023attention}, GAC-based MS-ResNet-34 achieves  70.42\% top-1 accuracy and surpasses all previous directly-trained SNNs with the same depth.
% In comparison with existing state-of-the-art works, the GAC-SNNs exhibit a remarkable enhancement in performance (Tab. \ref{tab:CIFAR}-\ref{tab:imagenet}). For CIFAR10 and CIFAR100 datasets, we report the mean and standard deviation of 5 runs under different random seeds as shown in  Tab. \ref{tab:CIFAR} and GAC achieves higher accuracy than previous work with only 2-time steps. Specifically,  using the same time steps, our GAC also has a significant advantage, improving 1.43\% and 3.10\% on CIFAR10 and CIFAR100 compared with  GLIF \cite{yaoglif}. Moreover, compared with the baseline MS-ResNet \cite{hu2021advancing}, our method can outperform theirs with 6 time-steps by 1.54\% and 4.04\%  on CIFAR10 and CIFAR100.
% Att-MS-ResNet \cite{yao2023attention} achieves 77.1\% accuracy with an added attention auxiliary module in MS-ResNet architecture. 

% However, this approach faces certain drawbacks. The dynamic calculation of attention scores and the consequent modification of membrane potential at each layer comes at the cost of sacrificing the spike-driven nature of the network. Additionally, this implementation requires a larger number of parameters (78.37M vs. 77.41M) and an extended training time (1000 epochs vs. 250 epochs) compared to alternative methods.

% Although Att-MS-ResNet-104 \cite{yao2023attention} achieves 77.1\% accuracy by plugging an additional attention auxiliary module in MS-ResNet architecture, it destroys the spike-driven nature and requires more parameters (78.37M vs. 77.41M) and training time (1000epoch vs. 250epoch).
\par
 \textbf{Efficiency.} Compared with prior works, the GAC-SNN shine in energy consumption (Tab. \ref{tab:imagenet}). We first make an intuitive comparison of energy consumption in the SNN field. Specifically, GAC-SNN (This work) vs. SEW-ResNet-34 at 4-time steps: Power, 2.20mJ vs. 4.04mJ. That is, our model has +3.22\% higher accuracy than SEW-ResNet with only the previous 54.5\% power. And GAC-SNN (This work) vs. MS-ResNet-34 vs. Att-MS-ResNet-34 at 6-time steps: Power, 2.34mJ vs. 4.29mJ vs. 4.11mJ. That is, Our model has the lowest power under the same structure and time steps. For instance, as the layers increase from 18 to 34, MS ResNet (baseline) has 1.83×(4.29mJ/2.34mJ) and 1.51×(5.11mJ/3.38mJ) higher energy consumption than our GAC-SNN. At the same time, our task performance on the above same depth network structures has improved by +2.04\% and +0.83\%, respectively.
 % What is more attractive is that the power of deep SNNs using GAC is further reduced. Specifically, GAC-SNNs (This work) vs. SEW-Res-SNN-152 vs. Att-MS-Res-SNN-104 vs. MS-Res-SNN-104 at 4-time step: Power, 4.67mJ   vs. 7.30mJ vs. 12.89mJ  vs. 10.19mJ. In conclusion, using GAC can greatly reduce the power of SNNs.
\begin{table*}[t]
\caption{Comparison between the proposed method and previous works on the ImageNet dataset. Power is the average theoretical energy consumption when predicting a batch of images from the test set, details of which are shown in Eq.\ref{energy}.  The "Spike-driven" column indicates if an independent design of the multiplication module is required in the SNNs architecture, which hinders the implementation of neuromorphic hardware.}
\begin{center}
\tabcolsep=0.16cm
\vspace{-4.5mm}
  \begin{adjustbox}{max width=\linewidth} 
\begin{tabular}{ccccccc}
\toprule
\multicolumn{1}{c}{ Methods} &\multicolumn{1}{c}{  Architecture}
&\multicolumn{1}{c}{  \tabincell{c}{Spike\\ -driven}}
&\multicolumn{1}{c}{  \tabincell{c}{Params\\(M)}}
&\multicolumn{1}{c}{ \tabincell{c}{Time\\Steps}}
&\multicolumn{1}{c}{  \tabincell{c}{Power\\ (mJ)}}
&\multicolumn{1}{c}{   \tabincell{c}{Top-1\\ Acc.(\%)}}\\
 \midrule
  ANN2SNN \cite{hao2023reducing}\textit{\textsuperscript{AAAI}}  & ResNet-34 & \cmark &21.79 &64&-  &68.61\\ 
    TET \cite{deng2022temporal}\textit{\textsuperscript{ICLR}} & Spiking-ResNet-34 &\cmark &21.79&6 &-  & {64.79} \\
    tdBN \cite{zheng2021going}\textit{\textsuperscript{AAAI}} &Spiking-ResNet-34 &\cmark &21.79 & 6&{6.39}  & 63.72 \\ 
   
    SEW-ResNet \cite{fang2021deep}\textit{\textsuperscript{NeurIPS}} 
    & SEW-ResNet-34 &\xmark &21.79  &4 &{4.04}  & 67.04 \\
       \midrule
    \multirow{2}{*}{MS-ResNet \cite{hu2021advancing}}  
    & MS-ResNet-18 & \cmark &11.69 &6& {4.29}  & 63.10 \\
    & MS-ResNet-34 & \cmark &21.80 &6 & {5.11}  & 69.43 \\
     \cmidrule{2-7}

    \multirow{2}{*}{Att-MS-ResNet \cite{yao2023attention}\textit{\textsuperscript{TPAMI}}}  
     & Att-MS-ResNet-18 & \xmark &11.96(+0.27)& 6 & {4.11}  & 64.15$^\star$\\
    & Att-MS-ResNet-34 & \xmark &22.30(+0.50)& 6 & {5.05} & 69.35$^\star$\\
   \cmidrule{2-7}
    \multirow{2}{*}{\textbf{GAC-SNN}} 
    & MS-ResNet-18& \cmark  &11.82(\textbf{+0.13})  &6/4 & 2.34/\textbf{1.49} & \textbf{65.14}/64.05\\
    & MS-ResNet-34 & \cmark  &21.93(\textbf{+0.13}) &6/4 & {3.38/\textbf{2.20}} & \textbf{70.42}/69.77\\
\midrule
\multirow{2}{*}{ANN \cite{hu2021advancing}}  
& MS-ResNet-18 & \xmark & {11.69}  & {N/A}& {14.26}& {69.76} \\ 
    & MS-ResNet-34 & \xmark & {21.80}& {N/A}  & {16.87} & {73.30}\\

\bottomrule
 \multicolumn{7}{l}{$^\star$ needs a large training time (1000 epochs and 600 batch size) compared to other methods.}
\end{tabular}
\label{tab:imagenet}
\end{adjustbox}
\end{center}
\vspace{-4mm}
\end{table*}

\begin{table}[htbp]
\centering
\vspace{-2.5mm}
   \caption{Comparisons with different coding schemes.}
   \vspace{-3mm}
   \begin{adjustbox}{max width=\linewidth} 
\begin{tabular}{cccc}
\toprule 

Architecture               & Schemes   & Time Steps                              & Acc.(\%) \\
\midrule
ResNet-19   & Phase Coding        & 8 & 91.40\\
VGG-16 & Temporal Coding \        &100 & 92.68\\
ResNet-19   & Rate Coding         &6&  93.16\\
MS-ResNet-18  & Direct Coding        & 6 &94.92\\
\midrule
MS-ResNet-18& GAC      & 6 & \textbf{96.46}$\pm$0.06 \\

\bottomrule 
\end{tabular}
\label{Ablation:coding}
\end{adjustbox}
\vspace{-4mm}
\end{table}

\begin{figure}[!t] \centering       
\includegraphics[width=0.95\columnwidth]{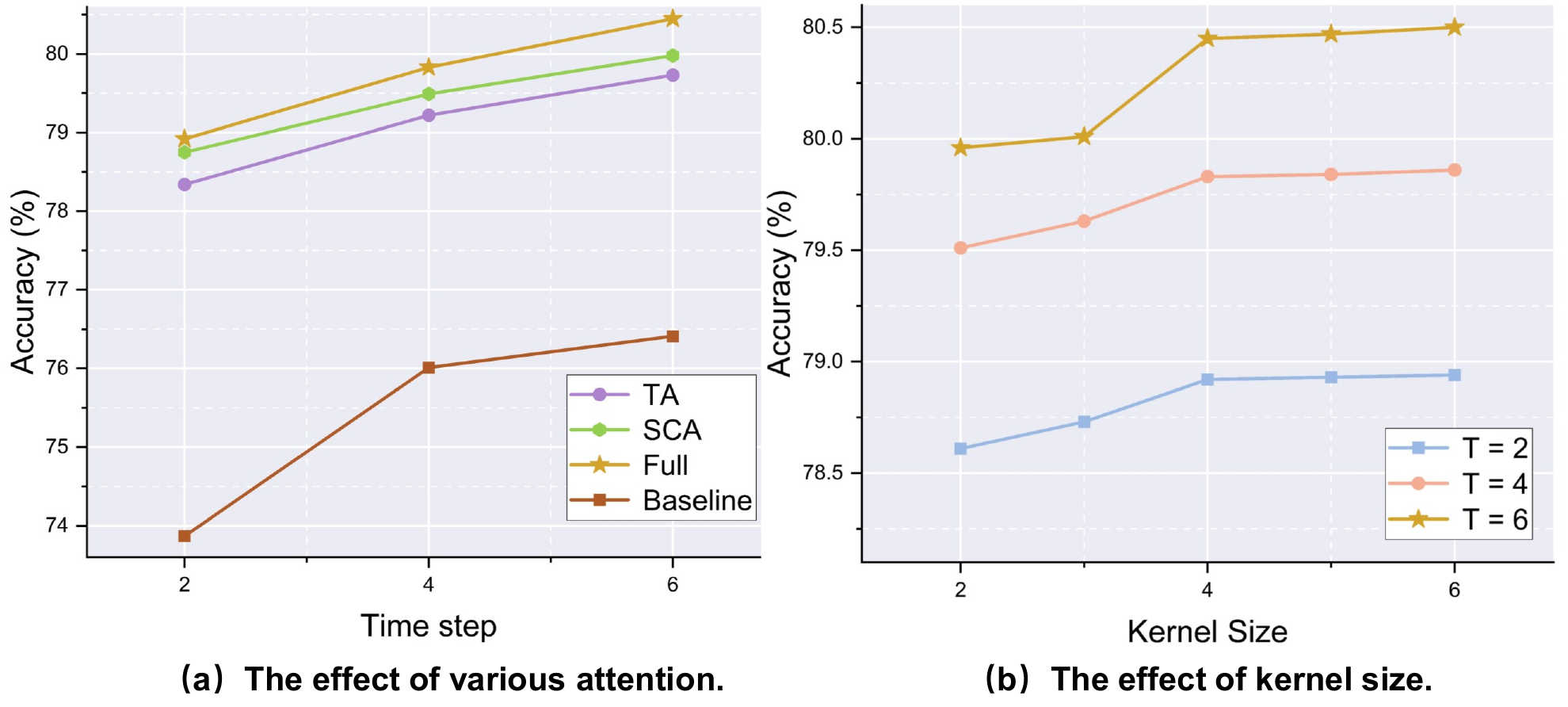} 
\vspace{-3mm}
\caption{Ablation study on CIFAR100.}  
\label{Ablation}  
\vspace{-4mm}
\end{figure}
% \begin{figure}[t] \centering    
% \subfigure[The Effect of various attention. ] { 
% \label{attention}     
% \includegraphics[width=0.47\columnwidth]{AnonymousSubmission/LaTeX/Attetnion.pdf} 
% }   
% \subfigure[The Effect of kernel size.] { 
% \label{kernel size} 
% \includegraphics[width=0.47\columnwidth]{AnonymousSubmission/LaTeX/kernel.pdf}
% }  
% \vspace{-3mm}
% \caption{Ablation study on CIFAR100.}  
% \vspace{-4mm}
% \label{fig06}     
% \end{figure}

\vspace{-2mm}
\subsection{Ablation study}
\vspace{-1mm}
\textbf{Comparison between Different SNN Coding Schemes.}
To future demonstrate the advantage of GAC, we evaluate the performance of our GAC and other coding schemes e.g., Phase coding \cite{kim2018deep}, Temporal coding \cite{zhou2021temporal}, Rate coding \cite{wu2019direct}, Direct coding  \cite{wu2019direct}. Tab. \ref{Ablation:coding} displays CIFAR10 test accuracy, where GAC achieves 96.18\% top-1 with MS-ResNet-18 in 6-time steps. 
% It also outperforms other schemes, even with fewer steps, when the network structure is the same.
\par
\textbf{The Effect of Parameter  Kernel Size $\boldsymbol K$.} 
We investigate the impact of the 2D convolution kernel size $K$ in the Spatial Channel Attention module of our GAC. Specifically, there is a trade-off between performance and latency as kernel size increases. It is almost
probable that when kernel size increases, the receptive region
of the local attention mechanism also does so, improving SNN performance. These benefits do, however, come at a cost of high parameters and significant latency. To this end, we trained the GAC-based MS-ResNet-18 on CIFAR100 with various $K$ values. As shown in Fig. \ref{Ablation}-(b), accuracy rises with increasing $K$, plateauing after $K$ exceeds 4. This indicates our GAC maintains strong generalization despite large $K$ variations. To maintain excellent performance and efficiency, we consider employing $K=4$ in our work.

\par \textbf{Comparison of Different Attention.} 
%  e. As shown in Fig. \ref{Ablation}-(a), the SCA module contributes significantly to performance enhancement. This is because time steps are much fewer than channels in most SNNs designs, allowing the SCA module to extract extra relevant features than the TA module. Notably, regardless of the module we ablate, performance will be affected, which may help you understand our design.
We conducted ablation studies on the Temporal Attention (TA) and Spatial Channel Attention (SCA) modules to assess their effects. Fig. \ref{Ablation}-(a) indicates that the SCA module contributes more to performance improvement due to the most SNNs design that channels outnumber time steps. The SCA module extracts additional significant features compared to the TA module. Notably, regardless of the module we ablate, performance will be affected, which may help you understand our design.
\vspace{-2mm}
\section{Discussions}
\vspace{-1mm}
\textbf{Analysis of Different GAC-SNN's ResNet Architecture.} Residual connection is a crucial basic operation in deep SNNs'  ResNet architecture. And there are three shortcut techniques in existing advanced deep SNNs. Spiking-ResNet \cite{hu2021spiking} performs a shortcut between membrane potential and spike. Spike-Element-Wise (SEW) ResNet \cite{fang2021deep} employs a shortcut to connect the output spikes in different layers. Membrane Shortcut (MS) ResNet \cite{hu2021advancing}, creating a shortcut between membrane potential of spiking neurons in various layers. Specifically, we leverage the membrane shortcut in the proposed GAC for this reason:
\par \textit{Spike-driven} describes the capacity of the SNNs' architecture to convert matrix multiplication (i.e., high-power Multiply-Accumulation) between weight and spike tensors into sparse addition (i.e., low-power Accumulation). The spike-driven operations can only be supported by binary spikes. However, as the SEW shortcut creates the addition between binary spikes, the values in the spike tensors are multi-bit (integer) spikes.  Additionally, GAC-based Spiking-ResNet is not entirely spike-driven. Because the second layer convolution operation's input changes to a floating-point number when using GAC, the input for the other layers remains a spike tensor.  By contrast, as shown in Fig. \ref{fig:main}-(c), spiking neurons are followed by the MS shortcut. Hence, each convolution layer in GAC-based MS-ResNet architecture will always get a sparse binary spike tensor as its input.
% \par \textit{Dynamical isometry}  provides a theoretical justification for effectively working deep neural networks \cite{chen2020comprehensive} and it has been demonstrated to be satisfied by MS-ResNet.

\par
\textbf{Impact of GAC and Other SNNs' Attention Methods on the Spike-driven Nature.} As shown in Fig. \ref{fig:top}-(b), other SNN-oriented attention works \cite{yao2021temporal, yao2023attention} adding an attention mechanism to the SNNs architecture need to design numerous multiplication blocks and prevent all matrix multiplications related to the spike matrix from being converted into sparse additions, which hinders the implementation of neuromorphic hardware. However, adding an attention mechanism in the encoder doesn't. As the encoder and architecture are decoupled in SNN hardware design \cite{li2023firefly}, our GAC, like direct coding \cite{wu2019direct}, incorporates the multiplication block for analog-to-spike conversion in the encoder without impacting the spike-driven traits of the sparse addition SNN architecture. 
% Because direct coding \cite{wu2019direct} employs a multiplicative block in the first layer to transform analog input to spikes, our GAC achieves a similar effect with this multiplication block.
% \textbf{Impact of GAC and Other SNNs' Attention Methods on the Spike-driven Nature.} As shown in Fig. \ref{fig:top}-(b),  except for the first layer, other SNN-oriented attention works \cite{yao2021temporal, yao2023attention} adding an attention mechanism to the SNNs architecture need to design numerous separate multiplication blocks and prevents all matrix multiplications related to the spike matrix from being converted into sparse additions, which hinders the implementation of neuromorphic hardware. Adding an attention mechanism in the encoding layer does not prevent implementation on neuromorphic hardware. Because direct coding \cite{wu2019direct} employs a multiplicative block in the first layer to transform analog input to spikes, our GAC achieves a similar effect with this multiplication block.
\vspace{-2mm}
\section{Conclusion}
\vspace{-1mm}
This paper focuses on the SNNs' coding problem, which is described as the inability of direct coding to produce powerful and temporal dynamic outputs. We have observed that this issue manifests as periodic powerless spike representation due to repetitive operations in direct coding. To tackle this issue, we propose Gated Attention Coding (GAC), a spike-driven and neuromorphic hardware-friendly solution that seamlessly integrates with existing Conv-based SNNs. GAC incorporates a multi-dimensional attention mechanism inspired by attention mechanism and human dynamics vision in neuroscience. By effectively establishing sptiao-temporal relationships at each moment, GAC acts as a preprocessing layer and efficiently encodes static images into powerful representations with temporal dynamics while minimizing redundancy. Our method has been extensively evaluated through experiments, demonstrating its effectiveness with state-of-the-art results: CIFAR10 (96.46\%), CIFAR100 (80.45\%), and ImageNet (70.42\%). We hope our investigations pave the way for more advanced coding schemes and inspire the design of high-performance and efficient spike-driven SNNs.

\section{Supplementary Material B}
In this supplementary material B, we first present the hyper-parameters settings and training details. Then we present the details of our architecture, training algorithm, and the analysis of the GAC’s firing rate, the period of direct coding \cite{wu2019direct}, and other coding schemes' dynamics. Moreover, we give additional results to depict the GAC's advantage.
\section{Hyper-Parameters Settings}
In this section, we give the specific hyperparameters of the LIF model and training settings in all experiments, as depicted in Tab. \ref{tab:config} and Tab.\ref{tab:hyperparameters}.
\begin{table}[htbp]
    \centering
    \caption{Hyper-parameter setting on LIF model.}
    \label{tab:config}
    \begin{tabular}{cc}
     \toprule 
    \multicolumn{1}{c}{Parameter} & \multicolumn{1}{c}{Value} \\ \midrule
    Threshold $\boldsymbol V_{th}$                           & 0.5                       \\
    Reset potential $\boldsymbol V_{reset}$                      & 0                         \\
    Decay factor $\tau$                           &0.25                    \\

    Surrogate function's window size $a$                             & 1                         \\ \bottomrule
    
    \end{tabular}
\end{table}

\begin{table}[htbp]
    \centering
    \caption{Hyper-parameter training settings of GAC-SNNs.}
    \begin{tabular}{lccc}
        \toprule 
         Parameter & CIFAR10 & CIFAR10 & ImageNet  \\
         \midrule
         Learning Rate & $5e-4$  & $5e-4$ & $1e-4$  \\
         Batch Size & 64 & 64 & 256 \\
         Time steps  & 6/4/2& 6/4/2 & 6/4 \\
         Training Epochs & 250 & 250 & 250 \\
         \bottomrule
    \end{tabular}
    \label{tab:hyperparameters}
\end{table}

\section{Training Details}
For all of our experiments, we use the stochastic gradient descent optimizer with 0.9 momentum and weight decay (e.g., CIFAR10 $5e-4$ and ImageNet $1e-4$ for 250 training epochs. The learning rate is set to 0.1 and cosine decay is to 0.  All the training and testing codes are implemented using PyTorch \cite{paszke2019pytorch}. The training is performed on eight NVIDIA V100 GPUs and 32GB memory for each task. The summaries of datasets and augmentation involved in the experiment are listed below.
\par
\noindent\textbf{CIFAR 10/100} consist of 50k training images and 10k testing images with the size of 32 × 32 \cite{krizhevsky2009learning}. We use ResNet-18 for both CIFAR10 and CIFAR100. Random horizontal flips, crops, and Cutmix are applied to the training images for augmentation. \par
\noindent\textbf{ImageNet}  contains around 1.3 million 1, 000-class images for training and 50, 000 images for validation \cite{krizhevsky2017imagenet}. The batch size and total epochs are set to 256. We crop the images to 224×224 and use the standard augmentation for the training data, which is similar to the Att-MS-ResNet\cite{yao2023attention}. Moreover, we use the label smoothing \cite{szegedy2016rethinking} to avoid gradient vanishing, which is similar to the Att-MS-ResNet \cite{yao2023attention}.
\section{Architecture Details}
The MS-ResNet-series architecture is the primary network architecture used in this work. There are numerous ResNet variations utilized in the field of SNNs, such as MS-ResNet-18, and MS-ResNet-34. They are also used interchangeably in existing literature \cite{hu2021advancing,yao2023attention}. To avoid confusion in various ResNet architectures, we will sort out the architecture details in this part.  Specifically, MS-ResNet-18 is originally designed for the ImageNet dataset in \cite{hu2021advancing}. To process CIFAR datase,
we remove the max-pooling layer, replace the first 7 $\times$ 7 convolution layer with a 3 $\times$ 3 convolution layer, and replace the first and the second 2-stride convolution operations as 1-stride, following the
modification philosophy from TET \cite{deng2022temporal}.

\begin{table}[htbp]
    \caption{MS-ResNet-series architecture for ImageNet.\label{tab:ImageNet_structure}}
    \centering
     \begin{adjustbox}{max width=\linewidth} 
    \begin{tabular}{c|c|c|c}
    \hline
    Stage & Output Size   & ResNet-18  & ResNet-34  \\ \hline 
    Conv1 & 112x112    & \multicolumn{2}{c}{7x7, 64, stride=2}  \\ \hline 
    Conv2 & 56x56                & $\left[\begin{array}{c}\text{3x3, 64}\\ \text{3x3, 64}\end{array}\right]*2$           & $\left[\begin{array}{c}\text{3x3, 64}\\ \text{3x3, 64}\end{array}\right]*2$            \\ \hline 
    Conv3 & 28x28              &$\left[\begin{array}{c}\text{3x3, 128}\\ \text{3x3, 128}\end{array}\right]*2$            & $\left[\begin{array}{c}\text{3x3, 128}\\ \text{3x3, 128}\end{array}\right]*4$            \\\hline
    Conv4 & 14x14             &$\left[\begin{array}{c}\text{3x3, 256}\\ \text{3x3, 256}\end{array}\right]*2$            & $\left[\begin{array}{c}\text{3x3, 256}\\ \text{3x3, 256}\end{array}\right]*6$            \\\hline
    Conv5 & 7x7              &$\left[\begin{array}{c}\text{3x3, 512}\\ \text{3x3, 512}\end{array}\right]*2$            & $\left[\begin{array}{c}\text{3x3, 512}\\ \text{3x3, 512}\end{array}\right]*3$            \\\hline
    FC    & 1x1         & \multicolumn{2}{c}{AveragePool, FC-1000} \\\hline
    \end{tabular}
    \end{adjustbox}
\end{table}
\vspace{-1mm}
\section{ Analysis of the Firing Rate}
GAC controls membrane potentials and reduces spike firing rates of the post-encoding layer in GAC-SNN. It achieves by transforming matrix multiplication into sparse addition, which can be implemented as addressable addition on neuromorphic chips. Moreover, high sparsity is observed in GAC-based MS-ResNet, as shown in Fig. \ref{energy}. The firing rate $r$ is defined as the firing probability of each neuron per timestep and can be estimated by:
\begin{equation}
    r = \frac{\# \text{Spike}}{\# \text{Neuron} \cdot T},
\end{equation}
Where \#Spike denotes the number of spikes during
$T$ timesteps and \#Neuron denotes the number of neurons
in the network. Specifically, after applying the GAC technique, the average spike firing rate of MS-ResNet-18 decreased from \textbf{0.215} to \textbf{0.154}. Similarly, the release rate of MS-ResNet-34 reduced from \textbf{0.225} to \textbf{0.1729}.  These reductions indicate that GAC has effectively contributed to reducing the spike firing rates in both MS-ResNet-34 and MS-ResNet-104 models. Lower spike firing rates can be beneficial in terms of reducing computational load, and energy consumption, and potentially improving the overall efficiency of the models.

\begin{figure}[htbp]
    \centering
    \includegraphics[width=0.50\textwidth]{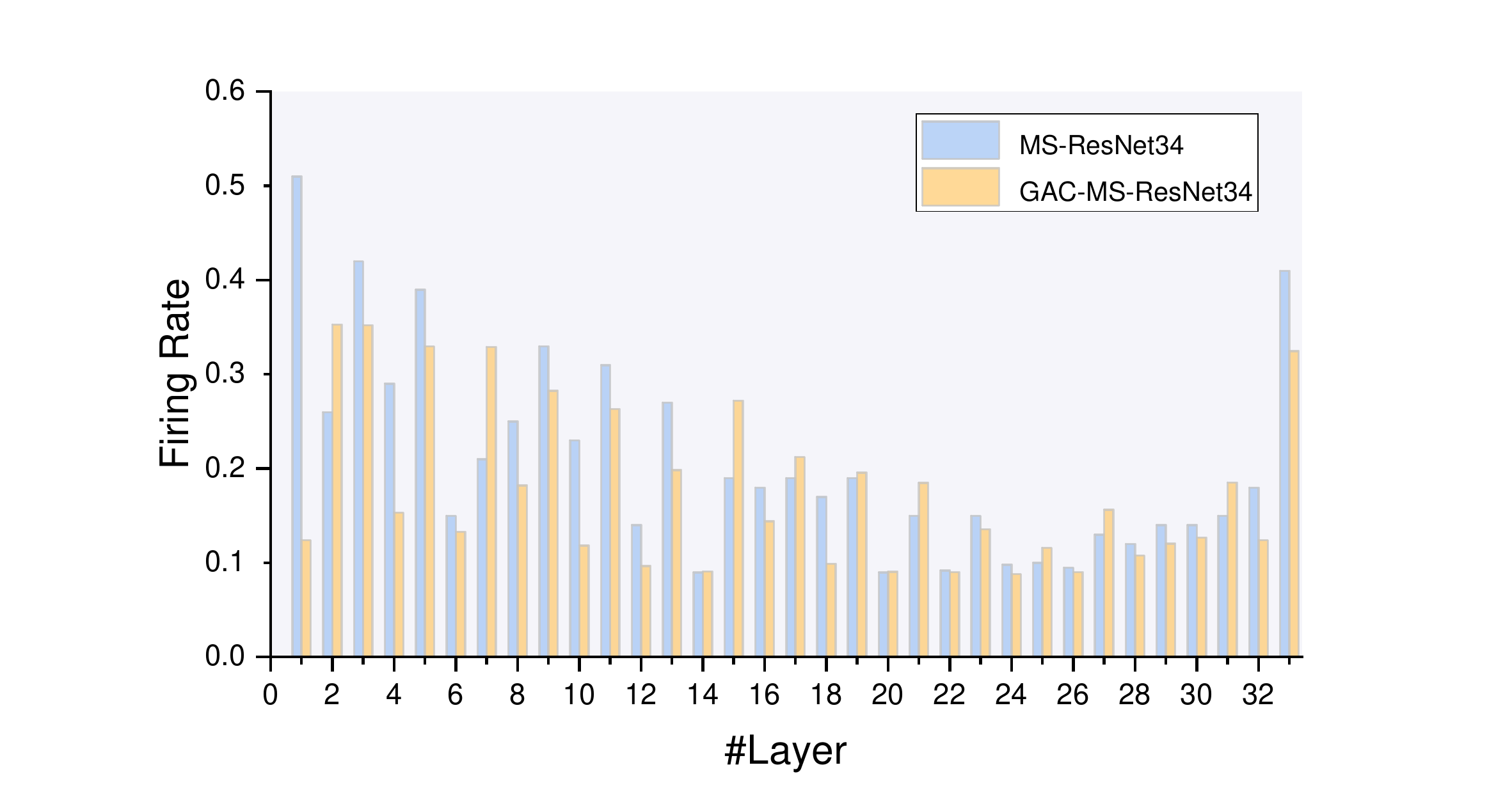}
    \caption{Firing rate advantage on the ImageNet dataset.}
    \label{energy} 
\end{figure}
\vspace{-1mm}
\vspace{-1mm}
\section{Derivation of the direct coding period $\boldsymbol T_p$}
For direct coding, the real value data repeatedly pass through the \{\textit{Conv-BN}\} layer for linear transformations and then the LIF model for $\{0, 1\}$ encoding results. Since the input of the LIF model is the same for every time step, it is trivial that the output of the LIF model in the encoding layer is periodical and the information of input (real value) is encoded into the period $\boldsymbol T_{p}$. Suppose the reset value is $0$ and the threshold is $\boldsymbol V_{th}$, the time $\boldsymbol T_{p}$ is expressed as follow:
\begin{equation}
\label{eq:period_encoding}
\boldsymbol T_{p} = \lceil\log_{\tau}(1 - \frac{\boldsymbol V_{th}(1- \tau )}{x_{i,j}})\rceil,
\end{equation}
Here $\tau$ is the attenuation factor and $x_{i,j}$ is the pixel of the input image of the LIF model after the \{\textit{Conv-BN}\} module. $\boldsymbol T_{p}$ is the first time step of firing and the period of firing for the corresponding LIF model. It is trivial that $T$ increases monotonically with $x$ and the resolution of the encoding depends on the value of $x$. The most important thing is that after the LIF model, the direct coding is period coding, making the $0, 1$ output periodical.
Moreover, the derivation of the direct coding's period is as follows:
Suppose when $t=0$, the membrane potential $\boldsymbol U^{t}=0$, and the neuron fires at time step $t=\boldsymbol T_{p}$. According to the direct coding, the input of any specific neuron is $x_{i,j}$ (a constant). Since the threshold and the attenuation factor are separately denoted as
$\boldsymbol V_{th}$ and $\tau$.According to the iterative formula of the LIF model before the spike fire time, we have:
\begin{equation}
\label{eq:serial_sum}
\boldsymbol U^{t} = \tau \boldsymbol U^{t-1} + x_{i,j} = \sum_{k=0}^{t-1}\tau^{k}x_{i,j} = x_{i,j}\sum_{k=0}^{t-1}\tau,
\end{equation}
Also, we have:
\begin{equation}
\label{eq:edge_membrane}
\boldsymbol U^{T_{p}-1}\leq \boldsymbol V_{th} \leq \boldsymbol U^{T_{p}},
\end{equation}
Bringing \ref{eq:serial_sum} to \ref{eq:edge_membrane}, we have:

\begin{align}
&x_{i,j}\sum_{k=0}^{T_{p}-2}\tau\leq \boldsymbol V_{th} \leq x_{i,j}\sum_{k=0}^{T_{p}-1},\\
&(1-\tau^{T_{p}-1})\frac{x}{1-\tau}\leq \boldsymbol V_{th} \leq (1-\tau^{T_{p}})\frac{x_{i,j}}{1-\tau},\\
&\tau^{T_{p}} \leq 1 - \frac{\boldsymbol V_{th}(1-\tau)}{x_{i,j}} \leq \tau^{T_{p}-1},\\
& T_{p}-1 \leq \log_{\tau}(1-\frac{\boldsymbol V_{th}(1-\tau)}{x_{i,j}}) \leq T_{p},\\
& T_{p} = \lceil\log_{\tau}(1 - \frac{\boldsymbol V_{th}(1- \tau )}{x_{i,j}})\rceil,
\end{align}

\section{Analysis of other Coding Schemes' Dynamics}
In this section, we use the observer model to analyze the dynamics of common coding schemes such as rate coding \cite{van2001rate} and coding scheme used in MS-ResNet  \cite{hu2021advancing}. Hence we give the following two propositions to describe, i.e.,
\begin{myproposition}
\label{appendix_prop1}
For rate coding with positive value $x\in(0, 1)$, giving the total timestep $\boldsymbol T$, denoting dynamics duration of rate coding as $\boldsymbol T_{r}$, then $\boldsymbol T_{r}=\boldsymbol T$.
\end{myproposition}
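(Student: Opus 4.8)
The plan is to show that rate coding keeps the observer's per-step entropy strictly positive at every time step within the window $\{1,\ldots,\boldsymbol T\}$, so that no Dynamics Loss ever occurs before $\boldsymbol T$ and hence $\boldsymbol T_{r}=\boldsymbol T$. First I would recall the rate-coding mechanism as specialized to a fixed position: the encoded vector $a=[a^{1},\ldots,a^{\boldsymbol T}]\in\{0,1\}^{\boldsymbol T}$ is produced by drawing each $a^{t}$ independently as a Bernoulli variable whose firing probability equals the (normalized) pixel value $x\in(0,1)$. This i.i.d. Bernoulli structure is precisely the ``mechanism'' that the observer of \textbf{Definition} 3 is assumed to know, while the specific parameter $x$ stays hidden.

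Next I would examine the observer's predictive probability $p^{t}(a^{t})=p(a^{t}\mid a^{1},\ldots,a^{t-1})$, and the key step is to argue that it lies strictly inside $(0,1)$ for every $t$. In the most favorable case, where the observer has already identified $x$ exactly, the next symbol is still a genuine Bernoulli$(x)$ draw, so its conditional law has binary entropy $H_{b}(x)=-x\log x-(1-x)\log(1-x)$, which is strictly positive exactly because $x\in(0,1)$. When $x$ is unknown, the posterior-predictive estimate formed from the first $t-1$ observations is a convex average of values in $(0,1)$ and therefore again remains strictly between $0$ and $1$; no finite prefix can force certainty about $a^{t}$. In either reading $p^{t}(a^{t})\notin\{0,1\}$, and hence $\mathcal H(\boldsymbol V^{t})>0$.

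From positivity of the per-step entropy for all $t\le\boldsymbol T$, the conclusion follows directly from \textbf{Definition} 3: the defining condition of a Dynamics Loss, namely the existence of some $t_{e}$ with $\mathcal H(\boldsymbol V^{t})=0$ for all $t>t_{e}$, is never satisfied for any $t_{e}<\boldsymbol T$. Consequently the infimum $\boldsymbol T_{e}=\inf(t_{e})$ saturates at $\boldsymbol T$, i.e. $\boldsymbol T_{r}=\boldsymbol T$. This matches the intuition that rate coding injects a fresh independent random bit at every step, so the temporal channel never collapses, in sharp contrast with the eventually-periodic direct coding of \textbf{Proposition} 1.

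The step I expect to be the main obstacle is making the entropy-positivity argument fully rigorous under the observer's uncertainty about $x$: one must rule out the possibility that conditioning on a long run of identical past symbols (say all ones) drives the predictive probability to a boundary. The clean resolution is to note that, since the true $x$ is never $0$ or $1$, a Bayesian predictive law equipped with a prior of full support on $(0,1)$ keeps $p^{t}\in(0,1)$ for every finite history; the binary entropy is then bounded below by a positive quantity at each of the $\boldsymbol T$ steps, which is exactly what forces the dynamics duration to equal the full horizon $\boldsymbol T$.
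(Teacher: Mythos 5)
Your proof is correct and follows essentially the same route as the paper: both arguments rest on the observation that rate coding emits i.i.d.\ Bernoulli$(x)$ symbols with $x\in(0,1)$, so the per-step entropy is strictly positive at every $t$ and no Dynamics Loss can occur before $\boldsymbol T$. Your additional care about the observer's uncertainty over $x$ (the Bayesian posterior-predictive staying in $(0,1)$) is a refinement the paper skips --- it simply computes the binary entropy as if $x$ were known --- but it does not change the underlying argument.
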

\begin{proof}
For rate coding, the output in each timestep follows the Bernoulli distribution of $n=1, p=x$, and is independent of each other at different time steps (that is, independent and identically distributed). Therefore, for any timestep $t$, information entropy $H(\boldsymbol{V}^{t})=x\log(x)+(1-x)\log(1-x) > 0$. This means that all-time step encoding information is dynamic.
\end{proof}

\begin{myproposition}
\label{appendix_prop2}
For MS coding, denoting dynamics duration of rate coding as $\boldsymbol T_{m}$, then $\boldsymbol T_{m}=\boldsymbol 1$.
\end{myproposition}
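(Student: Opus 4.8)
The plan is to show that the MS encoder produces a \emph{constant} sequence along the temporal axis, so that once the observer has read a single element every later element is fully determined, forcing the entropy to vanish from the second time step onward and pinning the Dynamics Duration at $1$.

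First I would pin down the MS coding scheme in the language of the observer model. As noted after Definition \ref{def1}, the MS-ResNet encoder feeds \emph{sequential real values} into the architecture: the static input passes through the shared \{\textit{Conv-BN}\} layer once and the identical real-valued result is repeated at every time step, with no spiking neuron interposed in the encoder. Fixing a position $c,h,w$ as in Definition \ref{def2}, this means the encoded feature vector is constant, $a=[x,x,\ldots,x]$, where $x$ is the (position-dependent) \{\textit{Conv-BN}\} output. This structural fact is what separates MS coding from rate coding (i.i.d.\ Bernoulli draws, Proposition \ref{appendix_prop1}) and from direct coding (periodic spiking, Proposition \ref{lemma1}).

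Next I would evaluate the observer's predictive probability $p^{t}(a^{t})=p(a^{t}\mid a^{1},\ldots,a^{t-1})$ one time step at a time. At $t=1$ the observer has no past to condition on and, by the last clause of the observer model, is aware of the encoder's mechanism but not its specific parameters; hence $a^{1}=x$ is genuinely uncertain and $\mathcal{H}(\boldsymbol V^{1})>0$. For every $t\ge 2$, the observer knows the encoder merely repeats its value and has already read $a^{1}=x$, so it predicts $a^{t}=x$ with probability $1$; then $p^{t}(a^{t})=1$ and, by the degenerate-event convention stated after the entropy definition, $\mathcal{H}(\boldsymbol V^{t})=0$.

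Finally I would translate this into Dynamics Duration via Definition \ref{def3}. The set of admissible thresholds is $\{t_{e}:\mathcal{H}(\boldsymbol V^{t})=0 \text{ for all } t>t_{e}\}$; since the entropy is positive exactly at $t=1$ and zero for all $t\ge 2$, this set equals $[1,\infty)$, whose infimum is $1$, giving $\boldsymbol T_{m}=\inf(t_{e})=1$. The only delicate point, and the step I would guard most carefully, is the positivity at $t=1$: it is what prevents $t_{e}$ from dropping below $1$ and collapsing $\boldsymbol T_{m}$ to $0$. This hinges on reading the ``mechanism known, parameters unknown'' clause as genuine uncertainty about the real value $x$ at the first observation, rather than on any quantitative estimate of the entropy, which the framework deliberately leaves unspecified.
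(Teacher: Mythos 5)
Your proposal is correct and follows essentially the same route as the paper's own (much terser) proof: MS coding repeats the identical real value at every time step, so the sequence is fully predictable after the first observation and the Dynamics Duration is $1$. Your added care about the positivity of $\mathcal{H}(\boldsymbol V^{1})$ (mechanism known, parameters unknown) is a worthwhile detail the paper leaves implicit, but it does not change the argument.
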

\begin{proof}
For MS coding, the information for each timestep is repeated. Thus, it is predictable after the first time of observation. Therefore, $T_{m}=1.$
\end{proof}
According to \textbf{Proposition} 1, it is known that rate coding has all-time step dynamics. However, it suffers from long time steps to remain high performance, while small time steps result in lower representation resolution. According to \textbf{Proposition} 2, MS coding results are the same at every time step. And our GAC-based MS-ResNet improves temporal dynamics and encoding efficiency through an attention mechanism.
\vspace{-1mm}
\section{Training Algorithm to Fit Target Output}
In this section, we introduce the training process of SNN gradient descent and the parameter update method of STBP \cite{wu2018spatio}. SNNs' parameters can be taught using gradient descent techniques, just like ANNs, after determining the derivative of the spike generation process. Classification, as well as other tasks for both ANNs and SNNs, can be thought of as optimizing network parameters to meet a goal output when given a certain input. Moreover, the accumulated gradients of loss $\mathcal{L}$ with respect to weights $\boldsymbol{W}^{j}_{n}$ at layer $n$ can be calculated as:

\begin{equation}
    \begin{cases}
     \frac{\partial \mathcal{L}}{\partial \boldsymbol{S}^{t, n}_{i}}=\sum_j \frac{\partial \mathcal{L}}{\partial \boldsymbol{U}_{j}^{t, n+1}} \frac{\partial \boldsymbol{U}_{j}^{t, n+1}}{\partial \boldsymbol{S}_{i}^{t, n}}+\frac{\partial \mathcal{L}}{\partial \boldsymbol{U}_{i}^{t+1, n}} \frac{\partial \boldsymbol{U}_{i}^{t+1, n}}{\partial \boldsymbol{S}_{i}^{t, n}}\\
\frac{\partial \mathcal{L}}{\partial \boldsymbol{U}_{i}^{t, n}}=\frac{\partial \mathcal{L}}{\partial \boldsymbol{S}^{t,n}_{j}} \frac{\partial \boldsymbol{S}^{t,n}_{j}}{\partial \boldsymbol{U}^{t,n}_{j}}+\frac{\partial \mathcal{L}}{\partial \boldsymbol{U}^{t+1, n}_{j}} \frac{\partial \boldsymbol{U}^{t+1, n}_{j}}{\partial \boldsymbol{U}^{t, n}_{j}}
\\
\frac{\partial \mathcal{L}}{\partial \boldsymbol{W}^{j}_{n}}=\sum^{T}_{t=1}\frac{\partial \mathcal{L}}{\partial \boldsymbol{U}_{i}^{t, n+1}}\boldsymbol{S}^{t,n}_{j},
    \end{cases} 
    \label{BPTT}
\end{equation}
where $\boldsymbol{S}^{t,n}$ and $\boldsymbol{U}^{t,n}_{j}$ represent the binary spike and membrane potential of the neuron in layer $n$, at time $t$. 
\\
Moreover, notice that $ \frac{\partial \boldsymbol{S}^{t, n}}{\partial \boldsymbol{U}^{t, n}}$ is non-differentiable. To overcome this problem,  \citeauthor{wu2018spatio} (\citeyear{wu2018spatio}) proposes the surrogate function to make only the neurons whose membrane potentials close to the firing threshold receive nonzero gradients during backpropagation.  In this paper, we use the rectangle function, which has been shown to be effective in gradient descent and may be calculated by:
\begin{equation}
\label{eq3}
    \frac{\partial \boldsymbol{S}^{t, n}}{\partial \boldsymbol{U}^{t, n}}=\frac{1}{a} \operatorname{sign}\left(\left|\boldsymbol{U}^{t, n}-\boldsymbol{V}_{\mathrm{th}}\right|<\frac{a}{2}\right),
\end{equation}
where $a$ is a defined coefficient for controlling the width of the gradient window.

% \begin{algorithm}[htbp]
% \caption{Overall Training Algorithm}
% \label{alg:algorithm}
% \textbf{Input}: Input image $X$; target output $Y$ = \{$Y^1 , Y^2 , \cdots, Y^t$ \}; learning rate $\eta$; network’s parameter $ W$; simulating time-steps $ T$;\\
% \textbf{Output}: network's average output $O$
% \begin{algorithmic}[1] %[1] enables line numbers
% \STATE \textbf{Function} \Call{Training} {$X$}
% \STATE Initialize network’s parameters $W$
% \STATE Using GAC to generate encoded feature sequences $X^{1:T}$ 
% \STATE Create an empty list $S$ = \{\}
% \For{$t=1$ to $t=T$}\\
% \STATE $~~~~~$ Input $X^t$ to network, get output spikes $S^t$
% and append $S^t$ to $S$ = \{$S^1 , S^2 , \cdots, S^t$ \}
% \STATE \textbf{endfor} 
% \STATE Calculate network's average output $O$ = $\frac{1}{T}$ $\sum_{t=1}^{T}S^{t}$

% \STATE Calculate loss $L$ = $\mathcal{L}(O, Y)$
% \STATE Update parameter $W$= $W$- $\eta$$\cdot\nabla_{\boldsymbol{\theta}}L$ by Eq. \ref{BPTT}
% % \STATE \textbf{return} $\hat{x}$
% \end{algorithmic}
% \end{algorithm}

\section{Additional Results}
\subsection{Convergence  Analysis}
We empirically demonstrate the convergence of our proposed method. As shown in Fig. \ref{fig:conver}, the performance of our GAC-based MS-ResNet stabilizes and converges to a higher level compared to MS-ResNet as training epochs increase. Moreover, the GAC-SNN achieves state-of-the-art performance after only \~20 epochs, demonstrating its efficacy.
\begin{figure}[htbp]
    \centering
    \includegraphics[width=0.95\columnwidth]{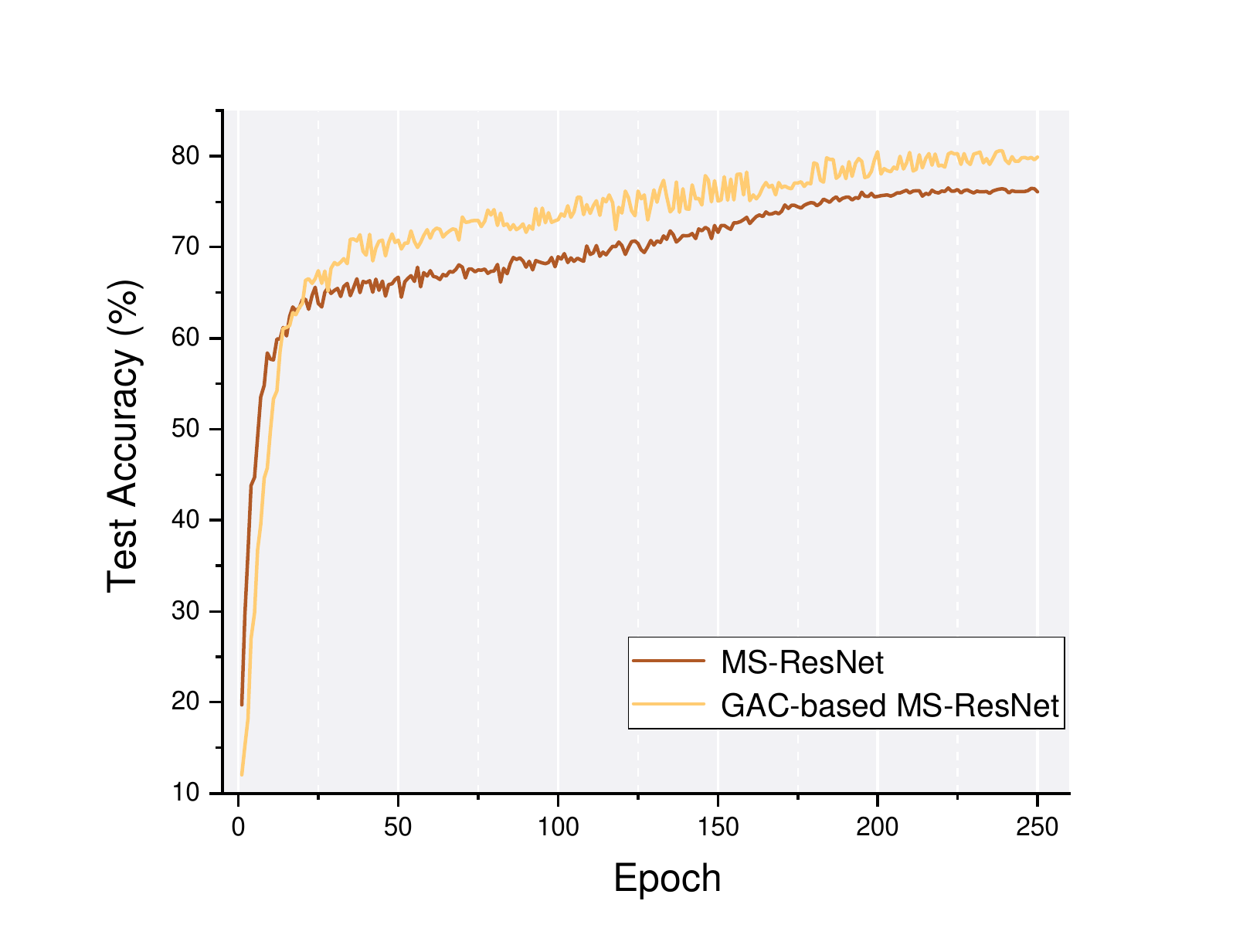}
    \caption{Convergence of compared SNN methods on CIFAR100 dataset}
    \label{fig:conver}
\end{figure}
\subsection{Trainable Parameter Analysis}
We also give the effects of the Temporal Attention (TA), Spatial Channel Attention (SCA), and Gated Attention Unit (GAU)  on the increase of model parameters, as shown in Tab. \ref{tab: parameter_incre}. First, with different datasets, the proportion of different modules to the number of model parameters varies. The increase of SCA and GAU to the number of parameters is much higher than that of TA. This phenomenon is consistent with our expectations that given that the channel size in the dataset is much larger than the simulation time step, indicating that SCA has a lot of parameters. 
\begin{table}[htbp]
\centering
\caption{The increase of TA, SCA, and GAU modules on the number of model parameters. An intuitive impression is that TA has little effect on the number of model parameters, while the insertion of SCA and GAU leads to a significant increase in the number of model parameters.}
{
\begin{tabular}{lccc}
\toprule

Datasets  & TA & SCA & GAU \\\midrule
CIFAR10                               & 0.0001\%    & 1.0287\%     & 1.0291\%             \\
CIFAR100                             & 0.0001\%       & 1.0259\%      & 1.0262\%        \\
ImageNet                            & 0.0001\%       & 0.5925\%      & 0.5927\%       \\
\bottomrule     
\end{tabular}}
\label{tab: parameter_incre}
\end{table}
% \subsection{Running time Comparison}
% In practical application, we can thus make a trade-off based on the actual situation: while considering energy consumption, we can only use the SCA mechanism in the model, but when an accuracy guarantee is required, the TCJA mechanism can be utilized. However, due to the efficient implementation of convolution, our approach does not need additional training time, which can be seen in Fig. 1.
\subsection{More Visualization Results on  GAC}
To further illustrate the advantages of our GAC, we provide the visualization results (i.e.,  model's GradCAM \cite{selvaraju2017grad}) on MS-ResNet-34 with GAC or direct coding, as shown in Fig. \ref{fig:gradcam} to help understand our design. And it can be seen in GradCAM heat map results that our GAC can help SNNs to capture more texture information.
\begin{figure}[htbp]
    \centering
    \includegraphics[width=0.95\columnwidth]{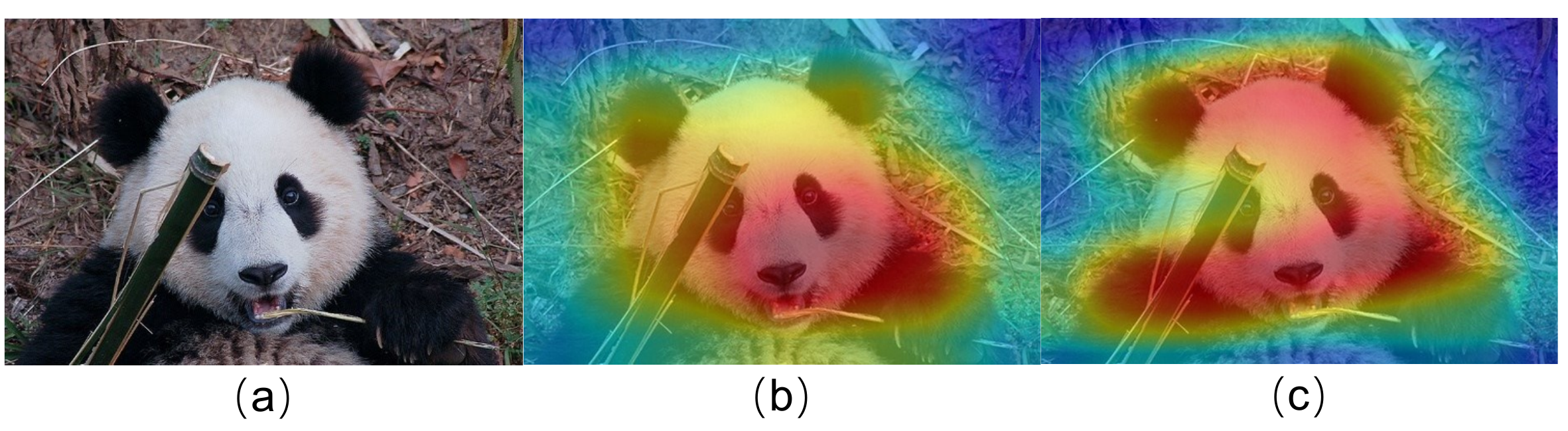}
    \caption{(a): original image. (b)(c) GradCAM results of without/with GAC on MS-ResNet-34.}
    \label{fig:gradcam}
\end{figure}
\bibliography{aaai24}

% \begin{figure}[htbp]
%     \centering
%     \includegraphics[width=0.97\columnwidth]{AnonymousSubmission/LaTeX/comparison.png}
%     \caption{Performance advantage on the CIFAR100 dataset.}
%     \label{attention}
% \end{figure}
\end{document}